\theoremstyle{plain}
\newtheorem{theorem}{Theorem}[section]
\newtheorem{proposition}[theorem]{Proposition}
\newtheorem{lemma}[theorem]{Lemma}
\theoremstyle{definition}
\newtheorem{assumption}[theorem]{Assumption}
\theoremstyle{remark}
\newcommand{\E}{\mathbb{E}} 
\renewcommand{\mid}{\,|\,}
\newcommand{\yres}[1]{R_{#1}}
\newcommand{\dres}[1]{V_{#1}}
\newcommand{\yltFun}{f_{\hspace{-0.2mm}u\hspace{-0.2mm}}}
\newcommand{\dltFun}{f_{\hspace{-0.2mm}v\hspace{-0.2mm}}}
\newcommand{\yltNoise}{W_u}
\newcommand{\dltNoise}{W_v}
\begin{document}

\title{Latent Variable Modeling for Robust Causal Effect Estimation\\[10mm]}


\author{Tetsuro Morimura}
\affiliation{%
  \institution{CyberAgent}
  \city{Tokyo}
  \country{Japan\\[10mm]}
}

\author{Tatsushi Oka}
\affiliation{%
  \institution{Keio University}
  \city{Tokyo}
  \country{Japan}
}

\author{Yugo Suzuki}
\authornote{This work was done during an internship at CyberAgent.}
\affiliation{%
 \institution{Yokohama City University}
 \city{Kanagawa}
 \country{Japan\\[10mm]}
}

\author{Daisuke Moriwaki}
\affiliation{%
  \institution{CyberAgent}
  \city{Tokyo}
  \country{Japan}}

\renewcommand{\shortauthors}{}

\begin{abstract}
Latent variable models provide a powerful framework for incorporating and inferring unobserved factors in observational data. In causal inference, they help account for hidden factors influencing treatment or outcome, thereby addressing challenges posed by missing or unmeasured covariates. This paper proposes a new framework that integrates latent variable modeling into the double machine learning (DML) paradigm to enable robust causal effect estimation in the presence of such hidden factors. We consider two scenarios: one where a latent variable affects only the outcome, and another where it may influence both treatment and outcome. To ensure tractability, we incorporate latent variables only in the second stage of DML, separating representation learning from latent inference. We demonstrate the robustness and effectiveness of our method through extensive experiments on both synthetic and real-world datasets.
\end{abstract}

\begin{CCSXML}
<ccs2012>
   <concept>
       <concept_id>10010147.10010257.10010293.10010300.10010305</concept_id>
       <concept_desc>Computing methodologies~Latent variable models</concept_desc>
       <concept_significance>500</concept_significance>
       </concept>
 </ccs2012>
\end{CCSXML}


\keywords{latent variable modeling, causal inference, double machine learning}



\maketitle

\section{Introduction}
\label{sec:intro}
Central to data science are both predictive analytics and causal inference. 
The primary goal of the latter is to estimate the effect of one variable, known as a {\it treatment}, on another, known as an {\it outcome}, using observational data \citep{Holland_causal_1986,Pearl_causal_2009,Athey_essay_2017}. 
This task is essential in various domains,
where it is required to understand how a treatment impacts an outcome of interest amidst numerous covariates \citep{Imbens_econometrics_2009, causal_inference_survey_2021}. 
For instance, understanding the efficacy of medical treatments is paramount in healthcare. 
Researchers often seek to estimate the effect of drug dosage on patient outcomes,
such as recovery rates or symptom alleviation,
while accounting for patient-specific characteristics and clinical histories \citep{causality_medical_imaging_2020,causality_patient_trajectory_2023}. 
Similarly, in marketing, determining the impact of advertising campaigns on consumer behavior is crucial.
Analysts might analyze how different levels of ad exposure influence consumer purchases,
taking into account various factors such as demographic data and previous buying behavior \citep{causal_ad_time_series_2015,uplift_survey_2021}.

In practical causal inference settings, ensuring the completeness of covariates is often challenging, as highlighted in several studies \citep{ignorability_example_2016,ignorability_example_2017,dml_ignorability_example_2021}.
To address this, we introduce latent variables to account for unobserved factors that may influence the outcome or treatment but are not captured by observed covariates.
This approach allows us to incorporate domain knowledge about the data-generating process, such as structured noise, into the model.
Even when such knowledge is limited, we propose model selection using information criteria to choose appropriate latent structures.

While latent variable modeling offers a powerful tool for incorporating unobserved factors into causal inference, it often involves iterative procedures, such as alternating between model parameter updates and latent variable inference, that can be computationally intensive. 
This computational challenge is particularly pronounced when combined with large-scale machine learning models, which are increasingly used in causal inference and already require substantial optimization efforts.
To mitigate this, we adopt the double machine learning (DML) framework~\citep{dml_2018}, which decouples the estimation of nuisance components (e.g., outcome and treatment models) from the final causal estimation.
Our key idea is to introduce latent variables only into the second stage of DML, enabling efficient estimation of causal effects, while keeping the overall computation tractable.
We refer to this framework as \textit{latent double machine learning} (latent DML). 

\begin{figure*}[t]
 \centering
 \includegraphics[width=1.\linewidth]{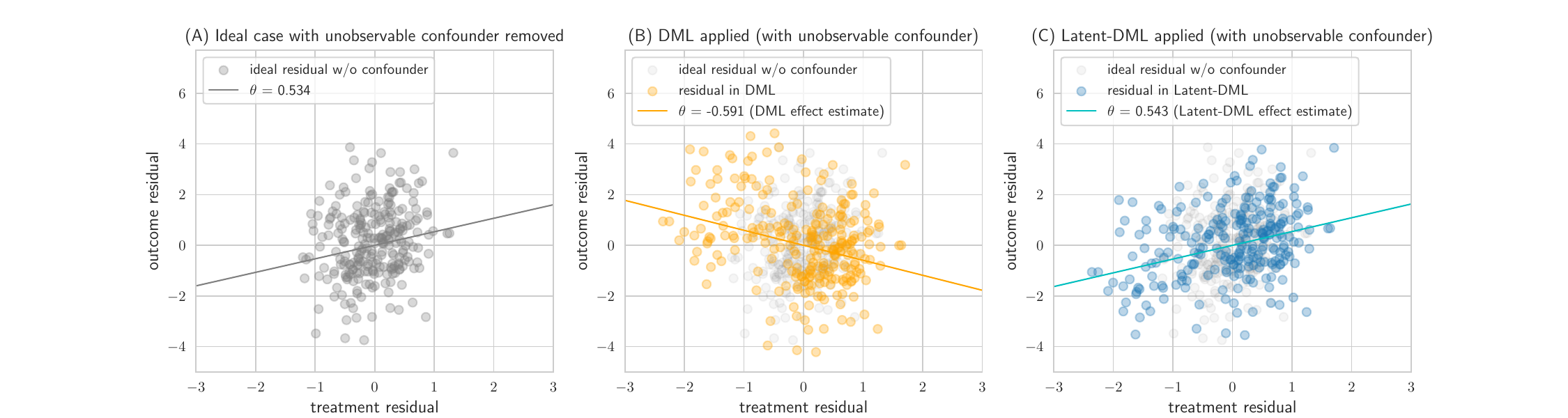}
 \Description{Comparison of residual plots: (A) ideal case without confounder, (B) DML with confounder (biased), (C) Latent DML correcting bias.}
 \caption{Causal effect estimation on treatment and outcome residuals in DML:
 Residuals are derived from the first stage of DML, where ML models estimate outcomes and treatments given observed covariates.
 The plots illustrate the second stage (regression analysis), where the relationship between the residuals informs causal effect estimation, with the true effect set to $\theta = 0.5$. 
 (A) The ideal case, with unobserved confounders removed, results in accurate estimation ($\hat\theta = 0.53$).
 (B) Applying standard DML in the presence of an unobserved confounder yields a biased estimate ($\hat\theta = -0.59$).
 (C) Latent DML adjusts the outcome residuals using estimated latent variables, producing an improved estimate ($\hat\theta = 0.54$) close to the true effect.}
 \label{fig:teaser}
\end{figure*}

We consider two distinct modeling scenarios involving latent variables.
The first scenario assumes that the latent variable affects only the outcome.
For instance, in advertising effectiveness estimation, unobserved social media buzz can significantly amplify the observable impact, such as Return on Advertising Spend (ROAS), without directly influencing the advertising strategies (treatment). 
The second scenario assumes that the latent variable influences both the treatment and the outcome, acting as an unobserved confounder.
One example is a competitor’s unrecorded marketing campaign, which may affect both a store’s marketing decisions (treatment) and its sales figures (outcome).
Another example arises in program evaluation studies, where unobserved participant characteristics (e.g., parental income) may influence both the duration of program participation (treatment) and long-term outcomes, potentially biasing the causal effect estimation.
Figure~\ref{fig:teaser} illustrates the second scenario by comparing standard DML and our proposed latent DML in the presence of unobserved confounding (see Section~\ref{sec:experiment_synthetic} for details).

%
%
Contributions of this paper include:
\begin{itemize}[leftmargin=*, itemsep=1pt, parsep=0pt]
\item{\bf Integration of latent variable modeling into DML:} 
We propose \textit{latent double machine learning} (latent DML), a novel framework that integrates latent variable modeling into the DML paradigm to address unobserved factors in causal inference.	
To the best of our knowledge, this is the first work to develop a general DML-based method tailored to latent variable models.
\item{\bf Algorithmic implementation:}
We present two implementations of latent DML, including adaptations of the EM algorithm. 
\item{\bf Comprehensive validation:}
We demonstrate the effectiveness of latent DML through extensive empirical validation and provide theoretical guarantees. 
\end{itemize}
\section{Preliminary}
\label{sec:preliminary}
This section provides an overview of the key concepts in DML and latent variable modeling, which are essential for our paper.

\subsection{Double machine learning}
\label{sec:dml}
Double machine learning (DML) is a framework that integrates machine learning techniques with causal inference, 
primarily focusing on estimating treatment effects in the presence of high-dimensional covariates \citep{dml_2018}.
This approach builds upon the partially linear model proposed by \cite{Robinson_1988}, 
enhanced by the incorporation of cross-fitting and the concept of Neyman orthogonality.

A key characteristic of Robinson's partially linear model is its ability to separate the estimation of causal effects from the nuisance components, which are parts of the model not directly related to the causal effect but potentially correlated with the treatment and outcome. 
The model is formulated as follows for $N$ i.i.d. data points $\{(X_i, D_i, Y_i)\}_{i=1}^{N}$:
\begin{align}
 Y_i &= \theta D_i + g(X_i) + U_i, 
 \label{eq:dml-outcome}
 \\
 D_i &= m(X_i) + V_i,
 \label{eq:dml-treatment}
\end{align}
where $X_i \in \mathbb{R}^d$ is the covariates, 
$D_i \in \mathbb{R}$ is the treatment variable, 
$Y_i \in \mathbb{R}$ is the outcome, 
and $\theta \in \mathbb{R}$ is the average causal effect. 
The unknown functions $g:\mathbb{R}^d\rightarrow\mathbb{R}$ and $m:\mathbb{R}^d\rightarrow\mathbb{R}$ represent the influence of covariates on the outcome and treatment, respectively.
The inferential target is the causal effect parameter $\theta$, 
while $g$ and $m$ are {\it nuisance} components, in the sense that their estimation is not of direct interest.
The noise terms $U_i$ and $V_i$ are assumed to be zero mean and independent of the covariates, that is,
\begin{align}
 &\mathbb{E}[U_i \mid X_i] = \mathbb{E}[V_i \mid X_i] = 0
 \label{eq:noise_zero_condition}
 \\
 &\mathbb{E}[U_i V_i \mid X_i] = 0.
 \label{eq:unconfoundedness}
\end{align}
Equation~\eqref{eq:unconfoundedness} ensures there is no unobservable confounder and is known as the unconfoundedness condition.

In the DML framework following Robinson's approach, 
the first step is to estimate the functions $m(X)=\mathbb{E}[D\,|\,X]$ and
$$h(X) := \mathbb{E}[Y \,|\, X ] = g(X) + \theta m(X).$$
This is achieved by employing machine learning models to approximate these functions.
Specifically, a model is used to approximate the expected treatment as $\hat{m}(X)$ and another model to approximate the expected outcome as $\hat{h}(X)$.
Once these models are fitted, the estimated functions $\hat{m}(X)$ and $\hat{h}(X)$ are used to compute the residuals:
\begin{align}
\hat{\yres{}} = Y - \hat{h}(X),
\quad 
\hat{\dres{}} = D - \hat{m}(X).
\label{eq:calc_residuals}
\end{align}
It is important to note that $\hat{\yres{}}$ represents the residual of the outcome, 
differing from the original noise term $U_i$ in the model. 
Specifically, the outcome residual is computed as
\begin{equation}
\yres{} := Y - h(X) =\theta V + U,
\label{eq:y_residual}
\end{equation}
reflecting the influence of both the causal effect and the original noise.
In contrast, the residual of the treatment corresponds directly to the original noise term $V$.
Finally, the causal effect estimate in DML is obtained using the formula:
\begin{align}
 \textstyle
 \hat{\theta} := \left(\sum_{i=1}^N \hat{\dres{i}}^2\right)^{\!-1} \sum_{i=1}^N \hat{\dres{i}} \hat{\yres{i}}
 \label{eq:calc_causal_effect}
\end{align}
This approach effectively isolates the causal effect $\theta$ from the nuisance components represented by $g(X)$ and $m(X)$.

Other feature of DML is the implementation of the cross-fitting procedure. 
In this process, the data is divided into several folds.
Each fold is used to calculate residuals $\hat{U}, \hat{V}$ for the estimation of causal effects, 
while the remaining data serve as the training set for the machine learning models $\hat{h}, \hat{m}$. 
For a detailed description of this procedure, see Algorithm \ref{algo:dml}.
This technique is pivotal in reducing overfitting, particularly in high-dimensional settings, and enhances the convergence rate of the estimators, thereby improving the overall robustness of the causal effect estimation.
%


\begin{algorithm}[H]
\caption{DML}
\label{algo:dml}
\begin{algorithmic}[1] 
\STATE \textbf{Input:} Data $\{(X_i, D_i, Y_i)\}_{i=1}^n$.
\STATE \textit{Step 1: Estimate residuals}
\STATE Split the data into $K$ folds: $\{\mathcal{T}_1,\dots,\mathcal{T}_K\}$.
\FOR{each fold $k = 1, \ldots, K$}
    \STATE Define $\mathcal{T}_{-k}$ as all data excluding fold $k$.
    \STATE Train ML models $\hat{h}(X)$ and $\hat{m}(X)$ on $\mathcal{T}_{-k}$.
    \STATE Compute residuals $\hat{\yres{}}$ and $\hat{V}$ for $\mathcal{T}_k$ using Eq.~\eqref{eq:calc_residuals}.
\ENDFOR
\STATE \textit{Step 2: Estimate causal effect}
\STATE Aggregate the residuals from all folds.
\STATE Calculate the causal effect estimate $\hat\theta$ using Eq.~\eqref{eq:calc_causal_effect}.\hspace{-2mm}
\STATE \textbf{Output:} Estimated causal effect $\hat{\theta}$.
\end{algorithmic}
\end{algorithm}

\if0
Finally, a brief mention of Neyman orthogonality is crucial. 
Neyman orthogonality refers to the property where the score function, 
used for estimating the inferential target, is orthogonal to the nuisance components.
According to Eq.~\eqref{eq:calc_causal_effect}, 
the score function of DML is defined as
\begin{align}
&S(\hat{\theta}, \hat{h}, \hat{m}) 
:= 
\sum_{i=1}^N \left(\hat{\yres{i}}  - \hat{\theta}\hat{\dres{i}} \right)\hat{\dres{i}},
\label{eq:score_function}
\end{align}
where $\hat{\yres{i}}, \hat{V}_i$ are calculated with the nuisance components $\hat{h}$, $\hat{m}$ using Eq.~\eqref{eq:calc_residuals}.
 Mathematically, when the score function $S(\hat \theta, \hat h, \hat m)$ satisfies the condition $\frac{\partial \mathbb{E}[S(\hat \theta, \hat h, \hat m)]}{\partial \hat h} = 0$ and $\frac{\partial \mathbb{E}[S(\hat \theta, \hat h, \hat m)]}{\partial \hat m} = 0$ at the true nuisance components ($g(X)$ and $m(X)$),
it ensures that small perturbations in the estimation of nuisance components do not significantly affect the estimation of the inferential target $\theta$.
This property is Neyman orthogonality and DML clearly satisfies this property, 
which enables the method to achieve more stable and reliable causal effect estimates, 
even in the presence of complex and high-dimensional nuisance structures.
\fi

\subsection{Latent variable modeling}
Latent variable modeling is a statistical technique that introduces inductive biases into probabilistic models by incorporating hidden variables, providing additional structure and enhancing model estimation.
This approach is often applied in scenarios where direct measurement of certain variables is not feasible, as it helps to account for hidden factors influencing the observed data.
Common methods for latent variable models include the Expectation-Maximization (EM) algorithm, variational Bayes, and Markov chain Monte Carlo methods \citep{Bishop06_PRML}.

We will use the EM algorithm in this study for its effectiveness and simplicity. 
Here, we assume a parametric stochastic model 
$p(\{( X_i, Z_i)\}_{i=1}^N \mid \gamma)=\prod_{i=1}^N p(X_i, Z_i \mid \gamma)$,
where $X_i$, $Z_i$, and $\gamma$ are 
observed data, latent variable, and model parameter, respectively.
For simplicity, 
we may notate $p(\{(X_i, Z_i)\}_{i=1}^N \mid \gamma)$ as 
$p(X, Z \mid \gamma)$.
The EM algorithm iteratively performs two steps until convergence to locally maximize the log likelihood function $\log p(X \mid \gamma)$:
\begin{itemize}[leftmargin=*, itemsep=1pt, parsep=0pt]
 \item {\bf Expectation step (E-step)}: 
       Compute the following expectation with respect to the conditional distribution of the latent variable $Z$ given the data $X$ and the current parameter estimate $\gamma^{(t)}$:
       \begin{equation}
	    Q(\gamma \mid \gamma^{(t)}) 
	  = 
	  \mathbb{E}_{Z\sim p(Z \mid X,\gamma^{(t)})}
	  \big[\log p(X, Z \mid \gamma) \big].
	  \notag
       \end{equation}
  \item {\bf Maximization step (M-step)}: 
	Update the parameter estimate by maximizing $Q(\gamma \mid \gamma^{(t)})$: 
	\begin{equation}
	 \gamma^{(t+1)} = \arg \max_\gamma Q(\gamma \mid \gamma^{(t)}).
	  \label{eq:em_m}
	\end{equation}
 \end{itemize}

\section{Latent double machine learning}
\label{sec:latent-dml}
%

We introduce \textit{latent double machine learning} (latent DML), a framework that integrates latent variable modeling into the double machine learning (DML) setting for improved causal effect estimation. 
Section~\ref{sec:latent-dml-approach} presents the overall design, while Sections~\ref{sec:latent-dml-outcome-only} and \ref{sec:latent-dml-confounder} describe two concrete instantiations of the framework. Section~\ref{sec:latent_dml_practical} discusses practical implementation considerations.

\subsection{Latent DML framework} 
\label{sec:latent-dml-approach}

We aim to address challenges in causal inference arising from unobserved factors that are not captured by observed covariates but may influence the treatment or outcome.
To this end, we propose a latent DML framework that incorporates latent variable modeling into the DML process, combining the flexibility of latent variable models with the computational and statistical advantages of DML.

Since DML inherently follows a two-step procedure, there are multiple ways to incorporate latent variables. 
However, due to the iterative nature of latent variable modeling, incorporating it into the first step would require repeated training of machine learning models, which is computationally demanding.
Therefore, we introduce a latent variable only into the second step of DML, specifically in lines $10$ and $11$ of Algorithm \ref{algo:dml}. This design allows for efficient latent variable inference without retraining the nuisance components.

%

%
%
%
%

In the latent DML framework, we need to specify the distribution of the outcome residual $R=\theta V + U$ in Eq.~\eqref{eq:y_residual}, which consists of the noise terms $U$ and $V$.
%
We consider the following generalized structure for the noise models
with parameterized functions $\yltFun$ and $\dltFun$ of a latent variable $Z$:
\begin{align}
 U = \yltFun(Z) + \yltNoise, \quad 
 V = \dltFun(Z) + \dltNoise.
 \label{eq:noise-models}
\end{align}
Here, $Z$, $\yltNoise$, and $\dltNoise$ are random variables, assumed to be mutually independent
and also independent of the covariate $X$.
The modeling of these functions and variables, while not trivial, allows for the incorporation of domain knowledge into the model, 
which could be advantageous when such prior information is available \cite{bayesian_data_analysis_book_2013,bayes_causal_inference_2022}. 
Specific instances are discussed in the subsequent subsections,
providing two distinct scenarios within our latent DML framework,
an {\it outcome-only latent variable} scenario in Section~\ref{sec:latent-dml-outcome-only}, 
and an {\it unobservable confounder} scenario in Section~\ref{sec:latent-dml-confounder}.

Most importantly, this framework allows $\mathbb{E}[\yltFun(Z)\dltFun(Z) \mid X]$ to be nonzero. 
Thus, it does not require the unconfoundedness condition of Eq.~\eqref{eq:unconfoundedness} and can handle unobservable confounders when the noise models are well specified.

We denote the set of all parameters involved in the residual modeling as $\gamma$. 
The set $\gamma$ includes parameters characterizing the functions $\yltFun$ and $\dltFun$, parameters for the random variables $Z$, $\yltNoise$, and $\dltNoise$, and a causal effect parameter $\hat\theta$.
The objective function for this modeling is the negative log-likelihood,
\begin{align}
 \textstyle
 L (\gamma) & = - \log p(R, V \mid \gamma ) \notag \\
 & = - \log \int_Z p(R, V\mid Z, \gamma) p(Z\mid\gamma) dZ.
 \label{eq:objective_lvm}
\end{align}
We estimate the parameter $\gamma$ by minimizing $L(\gamma)$ using the residual samples $\{(\hat R_i, \hat V_i)\}_{i=1}^N$ with a latent variable modeling method such as the EM algorithm, and denote the obtained estimate as $\hat\gamma$.
If the model space contains the true data-generating process with $\gamma_0$, then $\hat\gamma$ is expected to be a consistent estimator of $\gamma_0$.
A more rigorous theoretical analysis of this result is provided in Appendix \ref{sec:latent_dml_theory}.

Latent DML estimates the causal effect $\hat\theta$ by solving the following score equation, which is derived from moment conditions:
%
\begin{align}
 \textstyle
 \sum_{i=1}^N(\hat R^{\rm z}_i - \hat \theta \hat V_i) \hat V_i = 0,
 \label{eq:score-function}
\end{align}
where $\hat R^{\rm z}_i$ is an adjusted outcome residual defined as
\begin{align}
 \hat R^{\rm z}_i &:= \hat{\yres{i}} - \mathbb{E}[{\yltFun}(Z) \mid \hat{\yres{i}}, \hat{V}_i, \hat\gamma].
 \label{eq:adjusted-outcome-residual}
\end{align}
%
$\hat\theta$ is an unbiased estimate of the causal effect if the true model is included in the model space and the the nuisance components are estimated as $m=\hat m$ and $h=\hat h$.
The unbiasedness follows from
from Eqs.~\eqref{eq:y_residual}, \eqref{eq:noise-models}, and \eqref{eq:adjusted-outcome-residual}, where the error term $\hat R^{\rm z} - \hat\theta \hat V$ under the true parameter $\gamma_0$ satisfies:
$$R^z - \theta V = \yltFun(Z)  - \mathbb{E}[\yltFun(Z) \mid R, V, \gamma_0] + \yltNoise,$$
and thus the following condition for the unbiasedness holds:
\begin{align*}
    \mathbb{E}[R^z - \theta V \,|\, V] 
    &=
    \mathbb{E}[ \yltFun(Z)  - \mathbb{E}\left[\yltFun(Z) \mid R, V, \gamma_0] + \yltNoise \,|\, V\right]
    \\&
    =0, 
\end{align*}
since \(Z\), \(\yltNoise\), and \(\dltNoise\) are assumed to be mutually independent.
We summarize the procedure in Algorithm~\ref{algo:latent-dml}.
\begin{center}
\begin{minipage}{0.467\textwidth}
\begin{algorithm}[H]
\caption{Latent DML {\color{blue}(Additions to DML shown in blue)}}
\label{algo:latent-dml}
\begin{algorithmic}[1] 
\STATE \textbf{Input:} Data $\{(X_i, D_i, Y_i)\}_{i=1}^n$.
\STATE \textit{Step 1: Estimate residuals} 
\STATE Split the data into $K$ folds: $\{\mathcal{T}_1,\dots,\mathcal{T}_K\}$.
\FOR{each fold $k = 1, \ldots, K$}
    \STATE Define $\mathcal{T}_{-k}$ as all data excluding fold $k$.
    \STATE Train ML models $\hat{h}(X)$ and $\hat{m}(X)$ on $\mathcal{T}_{-k}$.
    \STATE Compute residuals $\hat{\yres{}}$ and $\hat{V}$ for $\mathcal{T}_k$ using Eq.~\eqref{eq:calc_residuals}.
\ENDFOR
\STATE \textit{Step 2: Estimate causal effect}
\STATE Aggregate the residuals from all folds.
\STATE {\color{blue} Estimate the latent model parameter $\hat\gamma$ by minimizing the empirical mean of $L(\gamma)$ of Eq.~\eqref{eq:objective_lvm} over the residuals $\hat R$ and $\hat V$.}
\STATE {\color{blue} Adjust the outcome residual $\hat R$ according to Eq.~\eqref{eq:adjusted-outcome-residual}.}
\STATE Calculate the causal effect estimate $\hat\theta$ using Eq.~\eqref{eq:calc_causal_effect}.
\STATE \textbf{Output:} Estimated causal effect $\hat{\theta}$.
\end{algorithmic}
\end{algorithm}
\end{minipage}
\end{center}

\subsection{Outcome latent model}
\label{sec:latent-dml-outcome-only}
We present an outcome latent model with $Z \in \mathbb{R}$ for a {\it outcome-only latent variable} scenario,
where the noise models in Eq.~\eqref{eq:noise-models} use $\yltFun(Z)=Z$ and $\dltFun(Z)=0$,
meaning that the treatment residual $V$ is independent of $Z$ and does not need to be modeled explicitly.
We model the noise term $U$ as
\begin{align}
 U &= Z + \yltNoise, 
 \label{eq:u_expon}
\end{align}
where $\yltNoise$ follows a normal distribution $\mathcal{N}(0,\sigma^2)$
of zero mean and standard deviation $\sigma$. 
Owing to the zero-mean property of $U$ in Eq.\eqref{eq:dml-outcome}, the mean of $Z$ must also be zero.

Assuming a normal distribution for $Z$ causes an identifiability issue because it violates the non-singular condition of the Fisher information matrix in Proposition \ref{prop:Identifiability}.
However, other choices are permissible provided that they maintain a zero mean.
As one of the simplest implementations, we model $Z$ as a shifted exponential distribution.
Specifically, we assume:
\begin{align}
 Z \sim \mathcal{E}(\beta) - \beta,
 \label{eq:z_expon}
\end{align}
where $\mathcal{E}(\beta)$ denotes the exponential distribution with mean parameter $\beta$.
%
%

The use of an exponential distribution for modeling $Z$ is particularly pertinent in domains
where outcomes $Y$ are subject to sudden, substantial increases due to factors not captured by covariates $X$. 
For instance, in advertising effectiveness analysis, 
a viral buzz on social media can significantly increase online sales or website traffic.
In such cases, the exponential distribution effectively models the potential for upward shifts in these outcomes, 
providing a realistic representation of these unpredictable events.

The joint distribution is, 
since the residual of the outcome is $\yres{} = \theta V + U$,
\begin{align}
 &p(\yres{},Z \mid V, \theta,\beta, \sigma) 
 :=
 \mathcal{E}(Z+\beta;  \beta)\,
 \mathcal{N}(\yres{}; \theta V + Z-\beta, \sigma^2)
 \label{eq:outcome_latent_joint_dist}
\end{align}

Having defined the stochastic model with latent variables, 
we now turn our attention to parameter estimation. 
The optimal parameters are those that maximize the marginal log-likelihood 
$$p(\yres{} \,|\, V, \theta, \beta, \sigma) = \int_Z p(\yres{}, Z \,|\, V, \theta, \beta, \sigma) dZ,$$
and we employ the EM algorithm.
Each iteration of the EM algorithm involves updating the parameters in the M-step. 
While most parameters can be updated in closed form during this step, 
the optimization of $\beta$ in Eq.~\ref{eq:em_m} poses a unique challenge.
This optimization problem cannot be analytically solved,
so we employ the natural gradient method for updating $\beta$,
using the Fisher information of $\mathcal{E}(\beta)$ as a metric.

Upon convergence of the EM algorithm, the causal effect estimate $\hat\theta$ is calculated by solving the score equation~\eqref{eq:score-function} with
the adjusted outcome residual 
$\hat R^{\rm z}_i = \hat{\yres{i}} - \mathbb{E}[Z_i\,|\, \hat{\yres{i}}, \hat{V}_i, \theta, \beta, \sigma]$ of Eq.~\eqref{eq:adjusted-outcome-residual}.
According to the joint distribution of Eq.~\eqref{eq:outcome_latent_joint_dist}, 
$\mathbb{E}[Z_i \,|\, \hat{\yres{i}}, \hat{V}_i, \theta, \beta, \sigma]$
is calculated as the mean of a truncated distribution of $\mathcal{N}(m_i, \sigma^2)$, where  
$$m_i := \hat{\yres{i}} - \theta\hat{V}_i + \beta - \sigma^{2}/ \beta,$$ 
truncated over the interval $[0, \infty]$. 
%


\subsection{Confounder latent model}
\label{sec:latent-dml-confounder}
Unlike the previous subsection, 
here a latent variable $Z \in \mathbb{R}$ acts as an unobservable confounder, 
affecting both the treatment and the outcome. 
%
%
We set $\yltFun(Z) = aZ$ and $\dltFun(Z) = bZ$ in Eq.~\eqref{eq:noise-models} as 
\begin{align}
 U = a Z + \yltNoise, \quad 
 V = b Z + \dltNoise,
 \label{eq:confounder_latent_noise}
\end{align}
where $a\in\mathbb{R}$ and $b\in\mathbb{R}$ are unknown and represent the impact of $Z$.
$\yltNoise$ and $\dltNoise$ follow normal distributions $\mathcal{N}(0, \sigma_u)$ and $\mathcal{N}(0, \sigma_v)$, respectively. 
We model $Z$ as a shifted Bernoulli distribution:
\begin{align}
 Z \sim \mathcal{B}(q) - q,
 \notag
\end{align} 
where $\mathcal{B}(q)$ denotes the Bernoulli distribution with success probability $q\in(0,1)$, 
ensuring zero mean for $U$ and $V$.
The choice of this model for $Z$ is particularly suitable for scenarios involving unobservable, randomly occurring factors, independent of the observed covariates $X$. 
%
A relevant example of $Z$ could be a competitor's unrecorded marketing campaign, influencing both treatment and outcome while remaining unobserved.

Based on this setting, the joint distribution is defined as:
\begin{align}
 &p(\yres{},V, Z \,|\, \theta, a, b, q, \sigma_u, \sigma_v)
 \notag
 \\&\hspace{10mm}
 :=
 \mathcal{B}(Z;q)\,
 \mathcal{N}(V; b Z, \sigma^2_v)\,
 \mathcal{N}(\yres{}; \theta V + a Z,  \sigma^2_u).
 \notag
\end{align}

Similar to the previous section, we employ the EM algorithm for the parameter optimization.
%
In the M-step of this case, while most parameters can be updated in closed form,
we update $q$ using the natural gradient method, due to the challenges in obtaining an analytical solution. 
%

Once the EM algorithm have converged, 
the causal effect estimate $\hat\theta$ is calculated from the score equation~\eqref{eq:score-function} with
the following adjusted outcome residual of Eq.~\eqref{eq:adjusted-outcome-residual},
\begin{align}
 \hat R^{\rm z}_i :=  \hat{\yres{i}} - a(\pi_{i}-q),
 \notag
\end{align}
where $\pi_i$ is the following conditional probability of $Z$,
\begin{align*}
 \pi_i 
 & := p(Z_i=1-q \mid \hat{\yres{i}}, \hat{V}_i, \theta, a, b, q, \sigma_u, \sigma_v)
 \\&\hspace{1mm}
 =
 \frac{1}{1 \!+\! 
 \exp\!\!
 \bigg(\!\!
 - \frac{ (2q-1)a^{\!2} + 2a(\hat{\yres{i}}-\theta\hat{V}_i) }{2\sigma_u^{2}}
 - \frac{ (2q-1)b^{\!2} + 2b \hat{V}_i}{2\sigma_v^{2}}
 \!\bigg)
 }.
\end{align*}
This adjustment of the outcome residual $\hat{\yres{i}}$ to $\hat R^{\rm z}_i$ 
can be regarded as removing the influence of the confounder $Z$ from the noise term $U$ in Eq.~\eqref{eq:confounder_latent_noise}.
%

\subsection{Practical considerations}
\label{sec:latent_dml_practical}
As shown 
in Appendix~\ref{sec:latent_dml_theory}, under well-specified noise models, 
the causal effect estimate $\hat{\theta}$ asymptotically converges to the true causal effect
even when there are unobservable confounders.
This property is particularly beneficial in practice, as it allows for robust causal estimation while keeping residual estimation and latent variable modeling separate.
By decoupling these components, our approach reduces computational cost and improves stability, particularly in large-scale models~\citep{dnn_causal_2021}.
%
Moreover, it demonstrates that even in the absence of unobservable confounders, latent DML can reduce the variance of the causal effect estimator by leveraging additional latent structure.

%
%
%
%

Choosing an appropriate model for latent DML can be challenging due to limited domain knowledge.  
%
To address this, we propose using model selection criteria, such as the Bayesian Information Criterion (BIC) \cite{Bishop06_PRML}, to identify a model that best describes the data. 
The effectiveness of this approach will be empirically evaluated in Section \ref{sec:experiment_model_selection}.
Moreover, our approach remains valid even under model misspecification by approximating the underlying distribution \citep{huber1967behavior, white1982maximum}.

%
%
%

%
\if0
We address two distinct scenarios in our Latent-DML framework:
\begin{itemize}
 \item {\bf Outcome-only Latent Variable}
       The latent variable affects only the outcome. This setup allows us to explore the impact of unobserved factors that directly influence the outcome variable.
       \\
 \item{\bf Unobservable Confounder}
      The latent variable acts as an unobserved confounder, impacting both the treatment and the outcome. This scenario addresses the complexities of causal inference when unobserved confounders are present.
\end{itemize}

We also emphasize the importance of model selection criteria, particularly information criteria, for choosing the most appropriate latent variable modeling approach in various scenarios. This approach ensures a more robust and accurate model for causal inference.

In the "Theoretical Analysis" subsection, we will further discuss the identifiability challenges associated with latent variable modeling, particularly when certain distributional assumptions, such as Gaussian distributions for all of $U, V, Z$ are made.
\fi

\section{Related work}
\label{sec:related_work}
Latent variable modeling in causal inference has been developed to address various challenges, such as the impact of outliers and unobserved confounders.
\citet{causal_robust_synthetic_control_2018,robust_causal_inference_2024} focus on the influence of outliers, aligning with our first scenario where a latent variable affects only the outcome.
Other works such as \citep{causal_vae_2017,lvm_root_causal_inference_2023,online_causal_structure_learning_latent_confounders_2023,causal_reasoning_latent_confounders_2023,cross_moment_causal_2023}
address unobserved confounders, related to our second scenario where a latent variable influences both treatment and outcome.
Notably, our use of discrete latent variables in the second scenario shares similarities with Latent Class Analysis (LCA), which identifies hidden subgroups within a population \citep{lca_2020,lca_developmental_research_2016}.
Despite these diverse approaches in latent variable modeling, our study is the first to integrate latent variable modeling into the DML framework, allowing for explicit estimation of latent structure within a two-stage causal inference setting.

Addressing unobserved confounders is a central challenge in causal inference.
Recent works like RieszNet and ForestRiesz adjust the feature space to improve robustness against hidden confounding, though they do not explicitly model latent variables \citep{Riesz_debiased_ml_2022, Omitted_variable_bias_2022, riesz_net_2020}.
%
A recent study
\citep{dr_causal_latent_factor_model_2024} leverages matrix completion for doubly robust estimation under unobserved confounding, exploiting multiple outcomes to identify low-dimensional latent factors. 
Feature selection techniques based on orthogonal search or doubly robust methods have also been explored to reduce sensitivity to unobserved confounders \citep{Causal_feature_selection_tmlr2023, DRCFS_icml2023}.
Our latent DML method is complementary to these approaches, as it directly models latent factors rather than merely mitigating their influence.
Integration with such methods may further help address unobserved confounding.

Numerous DML extensions have been proposed to handle various practical challenges, including
bias reduction \citep{coordinated_dml_2022}, 
conditional average treatment effects estimation \citep{dml_conditional_effect_2021},
dynamic treatment effect estimation \citep{dml_dynamic_effect_2021}, 
difference-in-differences \citep{dml_diff_in_diff_2020},
off-policy evaluation \citep{dml_ope_2020, dml_recom_2021},
kernel-based nonparametric inference \citep{dml_kernel_2020}, 
and computational efficiency improvements \citep{dml_parallel_compute_2021}.
Our work aligns most closely with \citet{dml_multiway_cluster_2021}, who address complex data structures such as multiway clustering within the DML setup.
In contrast, we introduce latent variable modeling into DML, enabling latent structure to be inferred directly from probabilistic models.
Our approach is orthogonal to many existing DML extensions and could potentially enhance their applicability to settings with latent structure.

Our approach is based on the likelihood framework for estimating residual distributions. It remains valid even under model misspecification, as we estimate models with finite-dimensional parameters to approximate the underlying distribution using a certain distance function. This argument has been widely used throughout statistics and econometrics \citep{beran1977minimum, buja2019models1, buja2019models2, huber1967behavior, rakhlin2017empirical, rinaldo2010generalized, white1980using, white1982maximum}. The recent literature on causal inference also applies the same argument \citep{chernozhukov2018doubledebiased, cuellar2020propensity, kennedy2019robust, neugebauer2007inverse, dml_conditional_effect_2021, laan2006targeted}.


\begin{figure*}[htb]
  \centering
 \includegraphics[width=1.\linewidth]{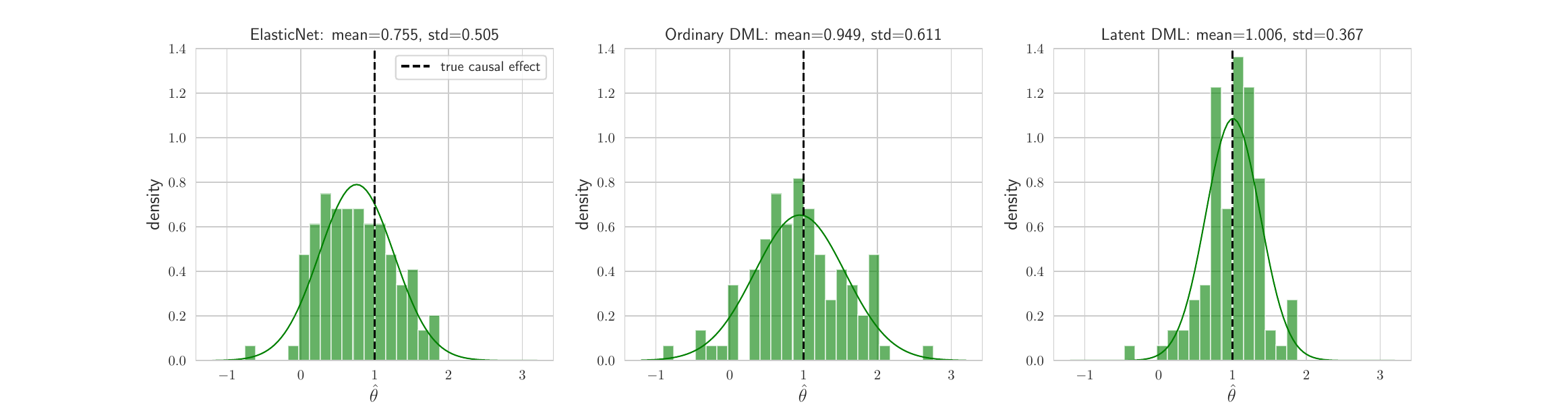}
 \vspace{-5mm}
 \Description{Outcome-latent setting: three histograms (ElasticNet, DML, latent DML); latent DML is closest to the true effect.}
 \caption{Distribution of causal effect estimates under the outcome latent variable across multiple runs. ElasticNet, ordinary DML, and outcome latent DML methods are compared, demonstrating the latter (ours) is closer alignment with the true causal effect.}
 \label{fig:output_only_latent}
\end{figure*}


\begin{figure*}[t!]
 \vspace{1.mm}
 \raggedright
 (A) Positive confounding ($a=2, b=2$)
 \\
 \begin{minipage}{\linewidth}
 \includegraphics[width=1.\linewidth]{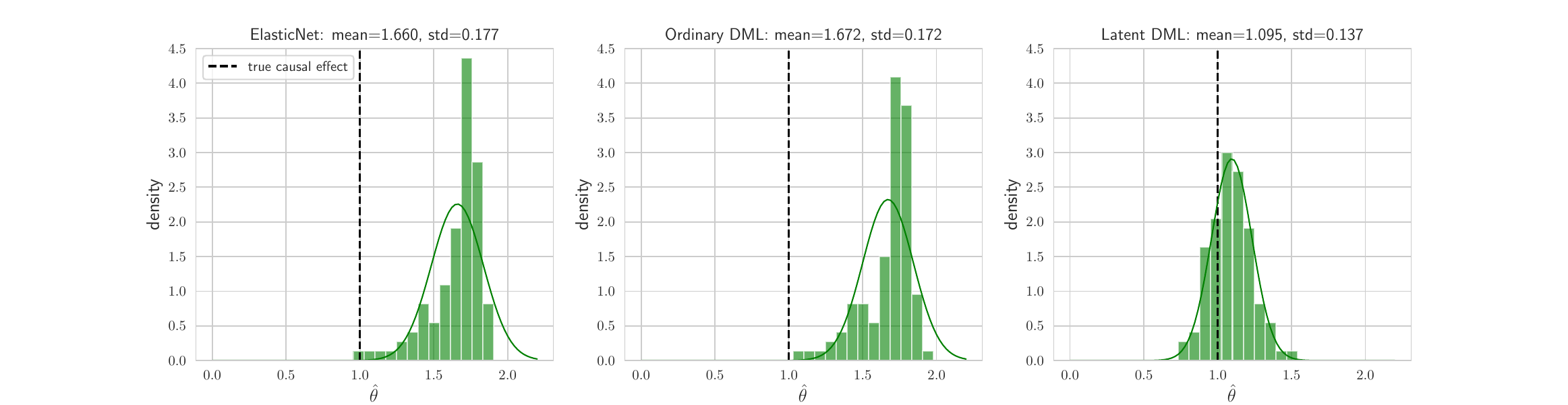}
 \vspace{-1.5mm}
 \end{minipage}
 (B) Negative confounding ($a=2, b=-2$) 
 \\
 \begin{minipage}{\linewidth}
 \includegraphics[width=1.\linewidth]{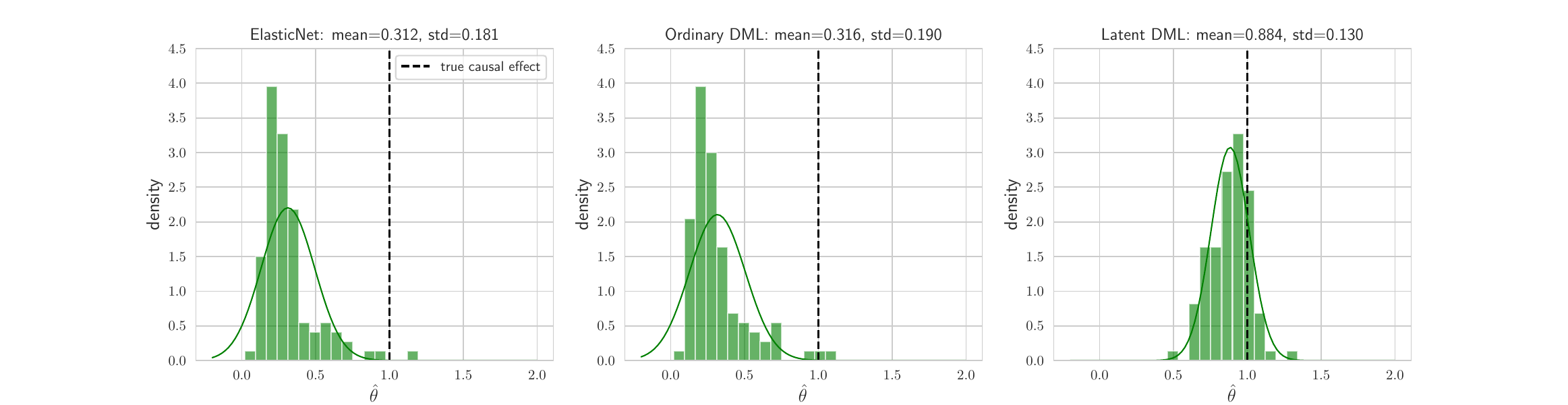}
 \end{minipage}
 \Description{Confounder-latent setting: two panels—(A) positive confounding and (B) negative; latent DML aligns best with the true effect in both.}
 \vspace{-2mm}
 \caption{Causal effect estimates in scenarios with unobserved confounders across multiple runs. (A) shows the case with positive confounding, and (B) shows negative confounding. Each subplot compares ElasticNet, ordinary DML, and confounder latent DML methods, highlighting the ability of the proposed method to estimate unbiased causal effects.}
 \label{fig:unobservable_confounder}
 \vspace{-1mm}
\end{figure*}

\section{Experiments}
\label{sec:experiment}

In this section, we conduct a series of numerical experiments. 
Since the true causal effects in real data are generally unknown, 
we first perform comprehensive experiments using synthetically generated data in Section \ref{sec:experiment_synthetic}. 
Experiments using real data are also presented in Section \ref{sec:experiment-real}.

\subsection{Synthetic data}
\label{sec:experiment_synthetic}
We first compare and evaluate the latent DML methods against the ordinary DML and regression analysis methods.
The focus here is to specifically assess the enhancements achieved by integrating latent variables into the DML framework, 
rather than conducting an extensive comparison with a wide array of baseline methods.
Additionally, experimental results regarding the selection of latent variable models are also presented.

The experimental setup with synthetic data was designed to rigorously evaluate the outcome latent and confounder latent models
in the latent DML framework under different conditions.
We conducted 100 runs with $300$ samples and $100$ covariates in each evaluation,
where covariates, treatment, and outcome are generated randomly with 
some fixed parameters, ensuring consistent experimental conditions.
In the first step of DML, ElasticNet \cite{Hastie09a} was employed for computing the residuals.
We also evaluated ElasticNet as a baseline, where the coefficient corresponding to the treatment variable served as the causal effect estimate.
For more details on the experimental setup, see Appendix \ref{app:implementation} and \ref{app:synthetic}. 

\subsubsection{Causal effect estimation under latent variables}
In our experiments, we first evaluated the outcome latent model as detailed in Section \ref{sec:latent-dml-outcome-only},
performing tests in an ideal setting without model misspecification.
The noise in this setup adhered to the pattern outlined in Eq.~\eqref{eq:z_expon}.
As shown in Figure \ref{fig:output_only_latent}, the outcome latent DML method proved effective in estimating causal effects, exhibiting minimal bias and variance.

We then expanded our investigation to scenarios involving unobserved confounders, maintaining the ideal test conditions.
This phase included testing cases where unobserved confounders influenced the outcome and treatment in both similar and opposite directions.
Figure \ref{fig:unobservable_confounder} shows that only the confounder latent DML method was able to accurately estimate the unbiased causal effect in these settings.

\subsubsection{Model selection}
\label{sec:experiment_model_selection}
The robustness of the latent DML framework against model misspecification was a key focus of our investigation. 
%
We evaluated five methods: ElasticNet, ordinary DML, outcome latent DML, confounder latent DML, and the BIC-selected latent DML.
The BIC-selected latent DML method chooses either the outcome or confounder latent DML variant based on the lowest BIC score.

Data generation encompassed four different scenarios, each with distinct noise structures and parameters. 
This included conditions without latent variable noise as well as those with Laplace-distributed output noise, 
the latter representing a case of model misspecification. 
The results in Figure \ref{fig:model_selection} demonstrate the effectiveness of our approach.
They indicate that the latent DML framework is robust, facilitating reliable model selection based on BIC,
even in the absence of prior knowledge and amidst the challenges posed by model misspecification.

\begin{figure}[htbp]
 \vspace{-0.5mm}
 \centering
 \includegraphics[width=0.9\linewidth]{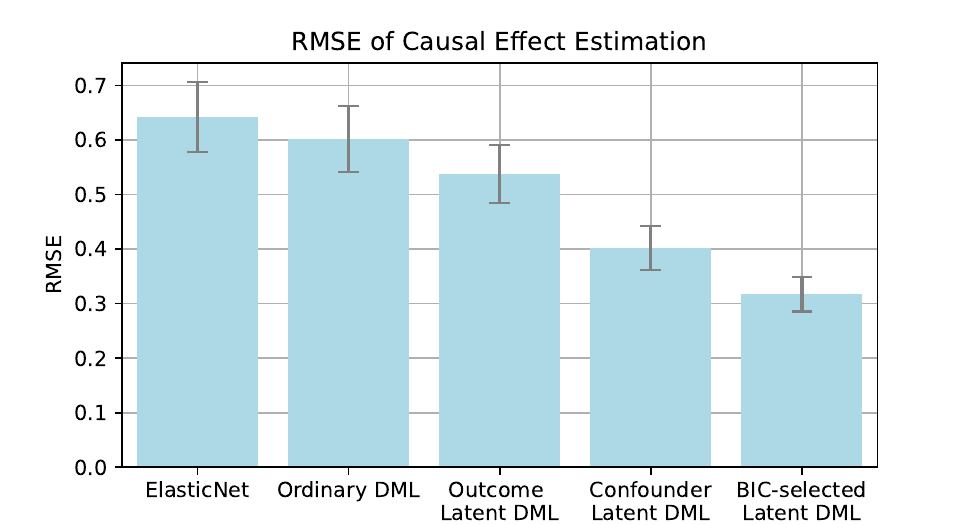}
 \vspace{-1mm}
 \Description{Bar plot comparing causal effect estimation errors. Latent DML with BIC selection achieves the lowest RMSE with error bars.}
 \caption{Causal effect estimation across diverse experimental settings, including model misspecification scenarios. Error bars indicate standard errors.} 
 \label{fig:model_selection}
\vspace{-1mm}
\end{figure}

\subsection{Real data applications}
\label{sec:experiment-real}
%
We conducted experiments on two real-world datasets: 
(i) the 401(k) dataset~\cite{401k_2003,chernozhukov2018doubledebiased}, which examines the effect of 401(k) participation on financial wealth (Section~\ref{sec:experiment-401k}), and 
(ii) an analysis of advertising operations, specifically investigating how the timing of advertisements impacts Return on Advertising Spend (ROAS) (Section~\ref{sec:experiment-ad}).

\subsubsection{401(k) participation and financial wealth}
\label{sec:experiment-401k}

We evaluated our method on the well-studied 401(k) dataset~\cite{401k_2003,chernozhukov2018doubledebiased,doubleml_401k}, which contains 9,915 household-level records from the 1991 Survey of Income and Program Participation (SIPP).
The outcome variable in our analysis was \texttt{net\_tfa}, representing net financial assets, and the treatment variable was \texttt{p401}, indicating participation in a 401(k) plan. 
The dataset included 10 covariates capturing demographic and financial characteristics, such as age, income, education, marital status, and home ownership.  

To assess robustness against missing covariates, we considered two settings where a key variable was unobserved:
\begin{itemize}
    \item \texttt{e401} – indicating eligibility for a 401(k) plan, highly predictive of treatment but less directly linked to the outcome.
    \item \texttt{pira} – indicating participation in an IRA, which likely correlates with both treatment and outcome via financial planning behavior.
\end{itemize}
These scenarios emulated latent confounding or selection bias.

We performed Monte Carlo resampling by drawing 2,000-sample subsets from the full dataset and repeated this process 100 times. As a reference, the causal effect of 401(k) participation was estimated at \$14.75K using ordinary DML on the full dataset with all covariates. Bias was reported relative to this benchmark.

We compared ordinary DML, Coordinated DML~\cite{coordinated_dml_2022}, and our latent DML method using BIC-based model selection. Table~\ref{tab:401k_results} summarizes the results.

As shown in Table \ref{tab:401k_results}, latent DML consistently achieved lower bias than standard and coordinated DML when important covariates were omitted. The improvement was particularly notable when \texttt{pira}, a proxy for financial sophistication, was missing—suggesting that latent DML effectively adjusted for unobserved confounders that influenced both treatment and outcome. In contrast, Coordinated DML partially mitigated bias when the omitted variable was only related to treatment (e.g., \texttt{e401}), but fell short under confounding scenarios. These results illustrate the robustness of our method against latent confounding structures.

Further implementation details on Coordinated DML are provided in Appendix~\ref{app:real_data_application_401k}.

\begin{table}[ht]
\caption{Causal effect estimation using the 401(k) dataset with Monte Carlo resampling. 
Bias values are reported as ``bias ± 95\% CI width,'' with a reference causal effect 
set at \$14.75K based on the DML estimate on full data without missing covariates.  
All values in the table are in thousands of dollars. 
\textsc{Latent DML (Ours)} denotes the BIC-selected latent DML method.
}
\vskip -0.06in
\label{tab:401k_results}
\begin{center}
\begin{small}
\begin{tabular}{llc}
\toprule
Covariates & Method & Bias  \\
\midrule
Full        & DML & 0.05 $\pm$ 0.86 \\ 
Full        & Coordinated DML & -0.79 $\pm$ 1.26 \\ 
Full        & Latent DML (Ours) & -0.76 $\pm$ 0.82  \\ 
\midrule
w/o \texttt{e401} & DML & -2.96 $\pm$ 0.70 \\ 
w/o \texttt{e401} & Coordinated DML & -1.84 $\pm$ 0.89 \\ 
w/o \texttt{e401} & Latent DML (Ours) & -1.64 $\pm$ 1.85  \\
\midrule
w/o \texttt{pira} & DML & 2.66 $\pm$ 0.89 \\ 
w/o \texttt{pira} & Coordinated DML & 1.69 $\pm$  1.17\\ 
w/o \texttt{pira} & Latent DML (Ours) & 1.22 $\pm$ 0.78  \\ 
\bottomrule
\end{tabular}
\end{small}
\end{center}
\vspace{-1mm}
\end{table}

\subsubsection{Advertisement timing and ROAS}
\label{sec:experiment-ad}
This experiment investigated the causal effect of advertisement timing on advertising effectiveness using real-world data from a domestic e-commerce platform. 
The treatment variable was defined as the number of days between when advertising becomes feasible (after sales exceed a threshold) and its actual commencement.

The dataset consisted of 155 cases and 45 covariates, including temporal measures, sales and advertising metrics, project-specific indicators, and curator variables (see Table~\ref{tab:real_data_apprication_covariates}).
These covariates were carefully selected to control for potential confounders and ensure a reliable causal analysis.
Advertising effectiveness was measured using Return on Advertising Spend (ROAS), a key performance indicator in digital marketing that evaluates how efficiently an advertising campaign converts spending into revenue.
ROAS was computed as the ratio of revenue generated from advertisements to the cost of those advertisements.

Table~\ref{tab:real_data_aplication_result} presents the results. 
The ordinary DML method produced unexpected findings, contrary to domain knowledge. 
Typically, a shorter delay in advertisement commencement is expected to increase ROAS, implying a negative causal effect. 
However, ordinary DML estimated a positive causal effect of +29.5\% per 100 days, contradicting this expectation. 
In contrast, Latent DML with model selection based on BIC estimated a negative causal effect of -60.4\% per 100 days, which aligns with prior expectations. 
This finding reinforces the validity of our proposed approach.
These results highlight the importance of accounting for latent factors that may influence both the timing of advertising and its effectiveness.

\begin{table}[htbp]
\centering
\caption{Experimental results: causal effect of advertisement timing on ROAS (percentage per 10 days).
Latent DML uses BIC to choose between outcome and confounder latent models. Based on this selection, the estimated causal effect of Latent DML is therefore $-60.37$.}
\label{tab:real_data_aplication_result}
\vspace{-1mm}
\begin{tabular}{lcc}
\hline
Method & Estimated Effect & BIC \\
\hline
Ordinary DML & 29.45 & 877.55 \\
Outcome Latent DML & -60.37 & 839.26 \\
Confounder Latent DML & 0.91 & 858.66 \\
\hline
\end{tabular}
\vspace{-1mm}
\end{table}

\if0
\subsubsection{Real data application of 401(k) dataset}

The 401(k) dataset consists of 9,915 household-level observations from the 1991 Survey of Income and Program Participation (SIPP) and has been widely used in causal inference studies. 
The outcome and treatment variables are the net financial assets and the participation in 401(k) plans, respectively.
To evaluate different causal estimation methods, we perform Monte Carlo resampling by repeatedly drawing 2,000-sample subsets from the full dataset and estimating causal effects across 100 runs. 
We compare standard DML, Coordinated DML \cite{coordinated_dml_2022}, and our proposed latent DML with BIC model selection, under scenarios where certain covariates (e.g., \texttt{e401}, \texttt{pira}) are unobserved. 
Table~\ref{tab:401k_results} summarizes the results.
It indicates that latent DML effectively reduces bias in scenarios with missing covariates.

\subsubsection{Real data application of advertising operation}
We conducted a real data analysis to verify our proposed methodology.
The goal of the analysis was to estimate the effects of advertising operations, 
specifically examining how the timing of advertisements impacts Return on Advertising Spend (ROAS).
The treatment variable was defined as the number of days from when advertising became feasible to its commencement.
This analysis incorporated 45 covariates across 155 cases, encompassing temporal measures, sales and advertising metrics, project-specific indicators, and curator variables, as detailed in Appendix \ref{app:real_data_application}.

In this analysis, we initially found results that were unexpected based on domain experience. 
Typically, a shorter duration to advertisement commencement would lead to higher ROAS,
suggesting a negative causal effect. 
However, using the ordinary DML method, we found a positive causal effect of 26.5\% per 100 days.  
In contrast, applying the latent DML framework with model selection based on the BIC revealed a negative causal effect of -60.4 \% per 100 days, which aligned with the domain experience. 
This contrast underlines the significance of method selection in causal inference.
For details on the experimental setup and results, see Appendix \ref{app:real_data_application_ad}.
\fi
\section{Conclusion}
\label{sec:conclustion}

%
%
%

This paper introduces a novel framework that integrates latent variable modeling into causal effect estimation, leveraging the structure of double machine learning (DML) for computational and statistical efficiency.
The proposed latent DML framework accounts for unobserved factors that may influence treatment or outcome, addressing challenges posed by missing or unmeasured covariates.
By introducing latent variables only in the second stage of DML, our method decouples residual estimation from latent variable inference, improving tractability and scalability without sacrificing estimation consistency.
We demonstrate the effectiveness of the proposed approach through extensive empirical validation on both synthetic and real-world datasets, supported by theoretical analysis of consistency, underscoring its robustness and practical applicability.
Future directions include relaxing noise-model assumptions, analyzing model misspecification, and exploring variational inference methods as an alternative to the EM algorithm.

\appendix
\section{Appendix}
Appendix is organized as follows. 
Section \ref{app:experiment} provides additional details on the simulation study and empirical analysis.
Section \ref{sec:latent_dml_theory} presents theoretical results for our estimation procedure.
Further details and full proofs are provided in Appendix \ref{app:extended_appendix}.

\subsection{Experimental details}
\label{app:experiment}
This section provides additional details of the experimental setup. 
%

\subsubsection{Experimental setup and model training}
\label{app:implementation}
We describe implementation details of our experiments, including the cross-validation strategy, hyperparameter tuning, and training procedures.

In our experiments, we used ElasticNet both as a baseline method for causal effect estimation and as the ML model for the first step of the DML frameworks. 
ElasticNet is a linear regression model that combines both L1 and L2 regularization, with two hyperparameters: $\alpha$ and \texttt{l1\_ratio}. The hyperparameter $\alpha$ controls the overall strength of the regularization, while \texttt{l1\_ratio} determines the relative contribution of L1 and L2 regularization. Specifically, $\texttt{l1\_ratio}=1.0$ corresponds to Lasso regression, and $\texttt{l1\_ratio} = 0.0$ corresponds to Ridge regression. 
For both the baseline and DML-based models, these hyperparameters were selected using 5-fold cross-validation (CV). The search space for $\alpha$ was set to $\{10^{-2}, 10^{-1}, 1, 10, 100\}$, and \texttt{l1\_ratio} was varied over $\{0.0, 0.25, 0.5, 0.75, 1.0\}$. %
This approach allows for robust estimation by combining both types of regularization.

For the baseline ElasticNet method, we estimated the causal effect by directly using the coefficient corresponding to the treatment variable in the final fitted ElasticNet model.
Unlike DML, this approach does not employ sample splitting or residual estimation, making it a simple but useful benchmark for comparison.

For the DML frameworks, ElasticNet was employed in a slightly different manner.
Specifically, we used a 5-fold cross-fitting approach ($K=5$), where the model was trained on four folds, and residuals were computed on the remaining fold in a rotating manner over all five folds, ensuring that each sample was used for residual estimation once.
This process is detailed in Algorithm \ref{algo:dml} for DML and Algorithm \ref{algo:latent-dml} for latent DML.

\subsubsection{Implementation details for 401(k) dataset experiment}
\label{app:real_data_application_401k}

We used the official implementation available at \url{https://github.com/nitaifingerhut/C-DML} for Coordinated DML~\citep{coordinated_dml_2022}. 
This implementation differed slightly from ours in its treatment of sample splitting. Specifically, it used a two-split strategy, where the model was trained on one subset and residuals were computed on the other. 
In contrast, our DML and latent DML methods adopted a 5-fold cross-fitting approach, as described in Section~\ref{app:implementation}. 
While this difference may have caused some variations in estimation performance, we reported the Coordinated DML results without any modifications to maintain consistency with prior work.

\subsection{Theoretical analysis}
\label{sec:latent_dml_theory}

In this section, we present the theoretical properties of our framework, including the consistency and asymptotic normality of the latent DML estimator. 
The detailed identification analysis and all associated proofs, including those of the theorems stated below, are provided in Appendix \ref{app:extended_analysis}.

The population log-likelihood function $L(\gamma)$ is defined in Eq.~\eqref{eq:objective_lvm}  
over the parameter space $\Gamma \subseteq \mathbb{R}^{d_{\gamma}}$, with $d_{\gamma}$ denoting the number of elements of $\gamma$.
We denote the gradient by $\nabla_{\gamma} L(\gamma) := \partial L(\gamma) / \partial \gamma$, and let $\gamma_{0} \in \Gamma$ satisfy the first-order condition $\nabla_{\gamma} L(\gamma_0) = 0$.

We now present the consistency and asymptotic normality of the latent DML estimator. 
To derive these asymptotic properties, we express the score function (the first derivative of the log-likelihood function) by using the moment function $\psi(\cdot)$ as follows: 
\begin{equation*}
    \frac{1}{n} \sum_{i=1}^{n} \psi(W_{i}, \gamma, \nu), 
\end{equation*}
where 
$\{ W_{i} \}_{i=1}^{n}$ with $W_{i}:=(D_{i}, X_{i}, Y_{i})$ is the observed sample,
$\gamma$ represents the finite-dimensional parameters, including both causal effect and late model parameters, and $\nu$ denotes the possibly infinite-dimensional nuisance parameters related to the first step in the DML procedure, or $\nu = (h, m)$. In what follows, 
$\gamma_{0}$ and $\hat{\gamma}$ denote the true finite-dimensional parameter and its estimator, respectively, 
while 
$\nu_{0}$ and $\hat{\nu}$ denote the nuisance parameter and its estimator. 

The following conditions are imposed on the model space, observations, moment function:


\begin{assumption} \label{as:model}
The model space is well specified, meaning that the true parameter $\gamma_0$ belongs to  $\Gamma$, i.e., $\gamma_0 \in \Gamma$.
\end{assumption}

\begin{assumption}\label{as:iid}
    The sample $\{W_{i}\}_{i=1}^{n}$, where $W_{i}:=(D_{i}, X_{i}, Y_{i})$, is 
    independently and identically distributed (i.i.d.) from $W:=(D,X,Y)$.
\end{assumption}

\begin{assumption}\label{as:1/4}
  The estimator $\hat{\nu}$ of the nuisance function $\nu_{0}$ is $n^{1/4}$-consistent in the squared mean-square-error sense, i.e.
  \begin{equation}
    \Big ( 
    \E_{X} 
    \Big [
      \big \|
      \hat{\nu}(X)-\nu_0(X)
      \big \|^2
      \Big] 
      \Big )^{1/2}
      = o_p(n^{-1/4}) ,
  \end{equation}
  The nuisance function and its estimator 
  are uniformly bounded by a constant. 
  That is,  
  $\|\nu_0(x)\| \leq C$
  and 
  $\|\hat{\nu}(x)\| \le C$    
  uniformly in $x\in \mathcal{X}$ for some constant $C$ almost surely.
\end{assumption}

\begin{assumption}\label{as:moment}
  The following conditions hold for the moment function $\psi$:
  \begin{enumerate}
    \item[(a)] 
    The moment function 
    satisfies that 
    $\E\big [\psi \big(W, \gamma, \nu_{0}(X) \big) \big] = 0 $
    only when $\gamma = \gamma_0$. 
    \item[(b)]
    The moment function 
    $\psi \big(W, \gamma, \nu_{0}(X) \big) $
    is dominated by some integral function.  
    \item[(c)] 
    The function $\gamma \mapsto \psi(w, \gamma, \nu)$ is continuously differentiable for any $w, \nu$, and its derivative is bounded by an integrable function $c(w)$, i.e., $\|\nabla_\gamma \psi(w, \gamma, \nu)\| \leq c(w)$ with $\E[|c(W)|] < \infty$. $\E\left[\nabla_\gamma \psi(W,\gamma_0,\nu_0)\right]$ is non-singular.
  \item[(d)] 
  The function $\nu \mapsto \psi(w, \gamma, \nu)$ is twice differentiable for any $w, \gamma$,
   The spectral norm of $\nabla_\nu \psi(w,\gamma,\nu)$ is uniformly bounded by $\sigma$. That is,
  $\|\nabla_\nu \psi(w,\gamma,\nu)\|_{op} \leq \sigma$ for all $w, \gamma, \nu$.
  the Hessian $\nabla_{\nu\nu} \psi(z,\gamma,\nu)$ has the largest eigenvalue bounded by some constant $\lambda_{\max}$ uniformly for all $\gamma$ and $\nu$. 
  \end{enumerate}
  \end{assumption}

These regularity conditions above are standard in the literature. See \citet{dml_2018} for instance.

To establish the consistency and asymptotic normality of the finite-dimensional parameters when the nuisance parameters satisfy Assumption \ref{as:1/4}, we employ Neyman orthogonality, which ensures the moment condition is insensitive to perturbations in the nuisance parameters. Formally, this condition is expressed as
\begin{align}
    \label{eq:ny}
    \frac{\partial}{\partial s}
    \E[
    \psi(W,  \gamma , \nu + s  \xi)
    ] \big |_{s=0}
    = 0, 
\end{align}
for all $\xi$ in an appropriate space and any real value $s$.
The following lemma establishes the Neyman orthogonality for the key moment function of our proposed estimator under the specification considered in this paper. 
While Neyman orthogonality may not immediately hold for more general specifications of latent distributions, the score function can be reformulated to satisfy this condition, as shown in 
\citet{morzywolek2023general}
and 
\citet{battey2024role} among others.

In the lemma below, 
we consider the specific moment function, given as 
\begin{align*}
    \phi(W,  \gamma, \nu)
    := &
    \big (
    Y - h(X) 
    - \theta 
    \big( D - m(X) \big) 
    - q_{\gamma}(W)
    \big ) \\
    & \times 
    \big( D - m(X)  - r_{\gamma}(W)\big), 
\end{align*}
where $\gamma$ represents all finite-dimensional parameters including causal effect parameter $\theta$ as well as the latent variable model.
$q_\gamma(\cdot)$ and $r_\gamma(\cdot)$ are model-specific real-valued functions parametrized by $\gamma$.
$h(\cdot)$ and $m(\cdot)$ represent nuisance functions in the double machine learning procedure, i.e., $h, m \in \nu$.
In our latent DML scenario in Eq.~\eqref{eq:score-function}, we set $q_{\gamma}(W) = \mathbb{E}[{\yltFun}(Z) \mid W, \gamma]$ and $r_\gamma(W) = 0$.

The moment function $\phi(\cdot)$ is assumed to satisfy Assumption \ref{as:moment}.
Additionally, the conditional mean-zero restrictions hold:
\begin{eqnarray}
\label{eq:mean-zero}
    \E
    \big [
    Y - h(X) 
    - \theta 
    \big( D - m(X) \big) 
    - q_{\gamma}(W)
    \big |X, D
    \big ] & = 0 \\
    \E
    \big [
     D - m(X)  - r_{\gamma}(W)
    \big | X
    \big ] &= 0.  \label{eq:mean-zero-1}
\end{eqnarray}


\begin{lemma}[Neyman Orthogonality] 
\label{pro:Neyman}
Suppose that Assumption \ref{as:moment} and the condition in (\ref{eq:mean-zero})-(\ref{eq:mean-zero-1}) hold. Then, we have 
\begin{align*}
    \frac{\partial}{\partial s}
    \E[
    \phi(W,  \gamma , \nu + s  \xi)
    ] \big |_{s=0}
    = 0, 
\end{align*}
for all $\xi$ in an appropriate space with a real value $s$. 
\end{lemma}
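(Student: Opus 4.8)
The plan is to verify Neyman orthogonality directly by differentiating the population moment $\E[\phi(W,\gamma,\nu+s\xi)]$ with respect to $s$ at $s=0$, treating the perturbation componentwise as $\xi=(\xi_h,\xi_m)$ acting on the nuisance pair $\nu=(h,m)$. Since $\phi$ is a product of two factors, the derivative splits by the product rule into two terms: one where the perturbation hits the ``outcome-residual'' factor $A(W,\gamma,\nu):=Y-h(X)-\theta(D-m(X))-q_\gamma(W)$ and one where it hits the ``treatment-residual'' factor $B(W,\gamma,\nu):=D-m(X)-r_\gamma(W)$. The derivative of $A$ in direction $\xi$ is $-\xi_h(X)+\theta\,\xi_m(X)$, and the derivative of $B$ is $-\xi_m(X)$; both are functions of $X$ alone. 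Under Assumption~\ref{as:moment}(b)--(d) the integrand is dominated, so differentiation under the expectation is justified and we may pass $\partial/\partial s$ inside $\E[\cdot]$.

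The two resulting terms are
\begin{align*}
    \E\big[(-\xi_h(X)+\theta\,\xi_m(X))\,B(W,\gamma,\nu_0)\big]
    \quad\text{and}\quad
    \E\big[A(W,\gamma,\nu_0)\,(-\xi_m(X))\big].
\end{align*}
For the first term I would condition on $X$: since $-\xi_h(X)+\theta\,\xi_m(X)$ is $X$-measurable, it pulls out of the inner conditional expectation, leaving $\E[B(W,\gamma,\nu_0)\mid X]=\E[D-m(X)-r_\gamma(W)\mid X]$, which is exactly zero by the mean-zero restriction~\eqref{eq:mean-zero-1}. For the second term I would condition on $(X,D)$: since $-\xi_m(X)$ is $(X,D)$-measurable, it pulls out, leaving $\E[A(W,\gamma,\nu_0)\mid X,D]=\E[Y-h(X)-\theta(D-m(X))-q_\gamma(W)\mid X,D]$, which vanishes by~\eqref{eq:mean-zero}. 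Hence both terms are zero and the total derivative is zero, establishing~\eqref{eq:ny} for $\phi$.

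The only genuinely delicate point is the interchange of differentiation and expectation, which relies on the domination and boundedness hypotheses in Assumption~\ref{as:moment}(b) and~(d) (uniform bound on $\nabla_\nu\psi$ and an integrable envelope); everything else is the product rule plus the tower property applied with the correct conditioning sigma-algebra for each term. I would also note in passing that the argument uses $\E[\,\cdot\mid X,D\,]$ for the first factor and $\E[\,\cdot\mid X\,]$ for the second precisely because the perturbation directions landing on each factor are measurable with respect to those respective sigma-algebras — this matching is what makes the mean-zero conditions~\eqref{eq:mean-zero}--\eqref{eq:mean-zero-1} exactly the right hypotheses, and it is the structural reason the specific choice $q_\gamma(W)=\E[\yltFun(Z)\mid W,\gamma]$, $r_\gamma(W)=0$ works.
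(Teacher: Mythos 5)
Your proposal is correct and follows essentially the same route as the paper: differentiate under the expectation (justified by the domination hypothesis), apply the product rule to the two residual factors, and kill each resulting term by conditioning on $(X,D)$ and on $X$ respectively via the mean-zero restrictions \eqref{eq:mean-zero}--\eqref{eq:mean-zero-1} and the tower property. Your derivative of the outcome-residual factor, $-\xi_h(X)+\theta\,\xi_m(X)$, is in fact the correct sign (the paper's displayed $\phi_1'$ has a harmless sign slip on the $\theta\xi_m$ term), and your remark about matching each perturbation direction to the sigma-algebra of the corresponding mean-zero condition is exactly the structural point the paper's terser argument relies on.
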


We now formally establish the consistency of our estimator under the regularity conditions in the following theorem.

\begin{theorem}[Consistency]
\label{theorem:consistency}
Under Assumptions \ref{as:model}-\ref{as:moment} and the Neyman orthogonality condition in (\ref{eq:ny}), we have, as $n \to \infty$,
\begin{equation*}
\hat{\gamma} \to^p \gamma_0. 
\end{equation*}
\end{theorem}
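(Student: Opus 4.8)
The plan is to establish consistency via a standard M-estimation (or Z-estimation) argument, treating $\hat\gamma$ as the solution to the empirical moment equation $n^{-1}\sum_i \phi(W_i,\gamma,\hat\nu(X_i)) = 0$ and showing that it converges in probability to the unique zero $\gamma_0$ of the population moment $\E[\phi(W,\gamma,\nu_0(X))]$. First I would decompose the empirical moment evaluated at the estimated nuisance into three pieces: (i) the empirical moment at the true nuisance $n^{-1}\sum_i \phi(W_i,\gamma,\nu_0(X_i))$, (ii) the population moment $\E[\phi(W,\gamma,\nu_0(X))]$, and (iii) the nuisance-perturbation term $n^{-1}\sum_i[\phi(W_i,\gamma,\hat\nu(X_i)) - \phi(W_i,\gamma,\nu_0(X_i))]$. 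For piece (i)$-$(ii), Assumption \ref{as:iid} (i.i.d. data) together with the domination condition in Assumption \ref{as:moment}(b) and the differentiability/integrable-derivative condition in \ref{as:moment}(c) gives a uniform law of large numbers over $\Gamma$ (e.g., via a bracketing or Glivenko–Cantelli argument), so this difference goes to zero uniformly in $\gamma$.

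Next I would control the nuisance-perturbation term (iii). Here the key tools are the Neyman orthogonality condition \eqref{eq:ny} — verified for $\phi$ in Lemma \ref{pro:Neyman} — and the smoothness-in-$\nu$ conditions of Assumption \ref{as:moment}(d), namely the uniformly bounded spectral norm of $\nabla_\nu\phi$ and the bounded largest eigenvalue of the Hessian $\nabla_{\nu\nu}\phi$. A second-order Taylor expansion of $\phi(W_i,\gamma,\hat\nu(X_i))$ around $\nu_0(X_i)$ yields a first-order term whose population expectation vanishes by Neyman orthogonality and whose empirical fluctuation is $o_p(1)$ (using the cross-fitting structure of Algorithms \ref{algo:dml}--\ref{algo:latent-dml} so that $\hat\nu$ is independent of the fold on which residuals are evaluated), plus a second-order remainder bounded by $\tfrac12\lambda_{\max}\,\E_X\|\hat\nu(X)-\nu_0(X)\|^2 = o_p(n^{-1/2})$ by Assumption \ref{as:1/4}. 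Combining these, $n^{-1}\sum_i \phi(W_i,\gamma,\hat\nu(X_i)) = \E[\phi(W,\gamma,\nu_0(X))] + o_p(1)$ uniformly in $\gamma$.

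With uniform convergence of the empirical moment to the population moment in hand, consistency follows from an identification/argmin-continuity argument: by Assumption \ref{as:moment}(a) the population moment has $\gamma_0$ as its unique zero, by \ref{as:moment}(c) it is continuous, and by \ref{as:model} $\gamma_0\in\Gamma$; hence any sequence $\hat\gamma$ with $n^{-1}\sum_i\phi(W_i,\hat\gamma,\hat\nu)=o_p(1)$ must satisfy $\hat\gamma\to^p\gamma_0$. (If $\Gamma$ is not compact one needs a mild coercivity or well-separation condition to rule out escape to the boundary; I would either assume $\Gamma$ compact or invoke the non-singularity of $\E[\nabla_\gamma\phi(W,\gamma_0,\nu_0)]$ in \ref{as:moment}(c) to get a local well-separation and combine with a preliminary consistency argument.)

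\textbf{Main obstacle.} The delicate step is piece (iii): making the nuisance-perturbation term $o_p(1)$ \emph{uniformly in $\gamma$}, not just pointwise. This requires that the first-order (orthogonality-exploiting) term's empirical fluctuation be negligible — which is where cross-fitting and a stochastic-equicontinuity or Donsker-type argument enter — and that the second-order remainder be uniformly dominated so that the $n^{1/4}$-rate of Assumption \ref{as:1/4} suffices. Verifying stochastic equicontinuity of the relevant empirical process over $\Gamma\times\{\hat\nu\}$ under only the moment/boundedness conditions stated is the technically heaviest part; everything else is a routine ULLN-plus-identification chain.
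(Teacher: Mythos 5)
Your proposal is correct in outline, but it takes a genuinely different route from the paper. You run the classical Z-estimation consistency argument: decompose the empirical moment at $(\gamma,\hat\nu)$ into (empirical at $\nu_0$) $-$ (population at $\nu_0$) plus a nuisance-perturbation term, prove a uniform law of large numbers over $\Gamma$, control the perturbation term via orthogonality and the second-order bound, and then conclude by unique identification of $\gamma_0$ plus a well-separation/compactness argument. The paper instead Taylor-expands the first-order condition $n^{-1}\sum_i\psi(W_i,\hat\gamma,\hat\nu)=0$ in $\gamma$ around $\gamma_0$ (mean value theorem), shows the intercept $n^{-1}\sum_i\psi(W_i,\gamma_0,\hat\nu)$ is $o_p(1)$ by a further expansion in $\nu$ plus Neyman orthogonality and the LLN, and then inverts the (uniformly convergent, non-singular) Jacobian $\E[\nabla_\gamma\psi(W,\gamma_0,\nu_0)]$ to get $\hat\gamma-\gamma_0=o_p(1)$. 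Your approach buys global consistency without the implicit localization in the paper's argument (whose mean-value point $\tilde\gamma$ is only known to lie between $\gamma_0$ and $\hat\gamma$, so replacing $\nabla_\gamma\psi(\cdot,\tilde\gamma,\hat\nu)$ by its value at $\gamma_0$ already presupposes something close to consistency); the price is that you must verify the ULLN and well-separation over all of $\Gamma$, which you correctly flag as the heavy step. The paper's approach is shorter and feeds directly into the asymptotic-normality proof. One simplification you could make: for consistency alone, Neyman orthogonality is not needed to kill the first-order nuisance term --- the uniform spectral-norm bound $\|\nabla_\nu\psi\|_{op}\le\sigma$ of Assumption \ref{as:moment}(d) together with $\E_X\|\hat\nu(X)-\nu_0(X)\|^2=o_p(1)$ already makes piece (iii) $o_p(1)$ uniformly in $\gamma$ by Cauchy--Schwarz; orthogonality only becomes essential at the $\sqrt{n}$ scale in Theorem \ref{theorem:normality}.
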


We next turn to the asymptotic distribution of our estimator and establish its limiting behavior. The following theorem shows that our estimator converges to a normal distribution and provides its asymptotic variance.

\begin{theorem}[Asymptotic Normality]
\label{theorem:normality}
    Under Assumptions \ref{as:model}-\ref{as:moment} and the Neyman orthogonality condition in (\ref{eq:ny}), we have, as $n \to \infty$,
    \begin{eqnarray*}
        \sqrt{n}(\hat{\gamma}-\gamma_0)  \to^d N(0, \Omega),
    \end{eqnarray*}
    where 
    $\Omega:=
    H
    Var\big ( \psi(W,\gamma_0, \nu_0) \big )
    H^{\top}
     $
     with 
     $
     H:=
      (\E [\nabla_\gamma \psi(W,\gamma_0, \nu_0)])^{-1}.
     $
\end{theorem}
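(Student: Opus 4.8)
\textbf{Proof proposal for Theorem~\ref{theorem:normality} (Asymptotic Normality).}

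The plan is to derive the limiting distribution from a standard Taylor expansion of the estimating equation $\frac{1}{n}\sum_{i=1}^n \psi(W_i,\hat\gamma,\hat\nu)=0$ around the true parameter $\gamma_0$, controlling the contribution of the estimated nuisance $\hat\nu$ via Neyman orthogonality. First I would invoke Theorem~\ref{theorem:consistency} to get $\hat\gamma \to^p \gamma_0$, which lets me localize the expansion. Then, using Assumption~\ref{as:moment}(c) (continuous differentiability of $\gamma\mapsto\psi$ with integrable derivative bound and non-singular $\E[\nabla_\gamma\psi(W,\gamma_0,\nu_0)]$), I would write
\begin{equation*}
0 = \frac{1}{n}\sum_{i=1}^n \psi(W_i,\gamma_0,\hat\nu) + \Big(\frac{1}{n}\sum_{i=1}^n \nabla_\gamma\psi(W_i,\bar\gamma,\hat\nu)\Big)(\hat\gamma-\gamma_0),
\end{equation*}
for some $\bar\gamma$ on the segment between $\hat\gamma$ and $\gamma_0$. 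A uniform law of large numbers (justified by the domination condition $\|\nabla_\gamma\psi\|\le c(W)$, $\E|c(W)|<\infty$) together with consistency of $\hat\gamma$ and $\hat\nu$ gives that the bracketed Jacobian converges in probability to $\E[\nabla_\gamma\psi(W,\gamma_0,\nu_0)] = H^{-1}$, which is invertible. Hence
\begin{equation*}
\sqrt{n}(\hat\gamma-\gamma_0) = -H\,\cdot\,\sqrt{n}\,\frac{1}{n}\sum_{i=1}^n \psi(W_i,\gamma_0,\hat\nu) + o_p(1).
\end{equation*}

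The crux is to show $\sqrt{n}\,\frac{1}{n}\sum_i \psi(W_i,\gamma_0,\hat\nu) = \sqrt{n}\,\frac{1}{n}\sum_i \psi(W_i,\gamma_0,\nu_0) + o_p(1)$, i.e.\ that plugging in the estimated nuisance is asymptotically negligible. I would decompose the difference into (i) an empirical-process/stochastic-equicontinuity term $\frac{1}{\sqrt n}\sum_i\big(\psi(W_i,\gamma_0,\hat\nu)-\psi(W_i,\gamma_0,\nu_0)-\E[\psi(W,\gamma_0,\hat\nu)-\psi(W,\gamma_0,\nu_0)\mid \hat\nu]\big)$, and (ii) a bias term $\sqrt n\,\E[\psi(W,\gamma_0,\hat\nu)-\psi(W,\gamma_0,\nu_0)\mid\hat\nu]$. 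Term (i) is $o_p(1)$ by the cross-fitting construction in Algorithm~\ref{algo:latent-dml} (so $\hat\nu$ is independent of the fold on which $\psi$ is evaluated) combined with the $L^2$-consistency of $\hat\nu$ from Assumption~\ref{as:1/4} and the Lipschitz-type bound on $\psi$ in $\nu$ from Assumption~\ref{as:moment}(d). Term (ii) is where Neyman orthogonality (Lemma~\ref{pro:Neyman}, equation~\eqref{eq:ny}) enters: a second-order Taylor expansion of $s\mapsto\E[\psi(W,\gamma_0,\nu_0+s(\hat\nu-\nu_0))]$ has vanishing first-order term by orthogonality, so the bias is bounded by $\tfrac{1}{2}\lambda_{\max}\,\E\|\hat\nu(X)-\nu_0(X)\|^2$ using the Hessian bound in Assumption~\ref{as:moment}(d); multiplied by $\sqrt n$ this is $\sqrt n\cdot o_p(n^{-1/2}) = o_p(1)$ precisely because Assumption~\ref{as:1/4} gives the $n^{1/4}$ rate in mean-square.

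Finally, the i.i.d.\ assumption (Assumption~\ref{as:iid}) and the moment conditions (finite variance of $\psi(W,\gamma_0,\nu_0)$, which follows from the domination in Assumption~\ref{as:moment}(b)) let me apply the Lindeberg--L\'evy central limit theorem to $\sqrt n\,\frac{1}{n}\sum_i\psi(W_i,\gamma_0,\nu_0)$, yielding convergence to $N\big(0,\operatorname{Var}(\psi(W,\gamma_0,\nu_0))\big)$; note $\E[\psi(W,\gamma_0,\nu_0)]=0$ by Assumption~\ref{as:moment}(a). Combining with the expansion via Slutsky's theorem gives $\sqrt n(\hat\gamma-\gamma_0)\to^d N(0,\Omega)$ with $\Omega = H\,\operatorname{Var}(\psi(W,\gamma_0,\nu_0))\,H^\top$, as claimed. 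I expect the main obstacle to be the rigorous handling of term (i) --- making the empirical-process remainder precise requires either a stochastic-equicontinuity argument or carefully exploiting the sample-splitting so that $\hat\nu$ can be conditioned on; the cross-fitting in Algorithm~\ref{algo:latent-dml} is what makes this clean, so I would lean on that rather than on entropy/Donsker conditions for the nuisance class.
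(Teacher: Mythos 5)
Your proposal is correct and follows essentially the same route as the paper's proof: a mean-value expansion of the estimating equation in $\gamma$, convergence of the Jacobian to $H^{-1}$ via a uniform law of large numbers, negligibility of the nuisance plug-in through Neyman orthogonality (killing the first-order bias) plus the Hessian bound and the $n^{1/4}$ rate from Assumption~\ref{as:1/4} (killing the second-order remainder), a conditional-variance/Chebyshev argument for the stochastic fluctuation, and finally the CLT with Slutsky's theorem. The only difference is organizational --- you center the plug-in difference at its conditional expectation and then Taylor-expand the bias, whereas the paper Taylor-expands $\psi$ pointwise in $\nu$ and then centers the linear term --- but the two decompositions rest on exactly the same ingredients and yield the same bounds.
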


The asymptotic variance derived in Theorem \ref{theorem:normality} demonstrates the potential benefit of controlling for unobserved variables even in the absence of confounding effects. This advantage stems fundamentally from conditioning on additional information, which reduces variability in estimating equations. More precisely, by the law of total variance, the conditioning process necessarily decreases the overall variance of our estimating equations, which translates directly to smaller asymptotic variance and consequently narrower confidence intervals. This efficiency gain occurs despite the absence of endogeneity issue, underscoring that control variables can enhance estimation precision beyond their conventional role in bias reduction.

\section*{Usage of generative AI}
We used generative AI tools at specific stages of this work. During manuscript preparation, OpenAI ChatGPT (GPT-4) was employed to revise phrasing and improve clarity. GitHub Copilot was used as an assistive tool during code development to suggest code snippets. These tools were not used to generate scientific content, design algorithms, or create experimental results. All core ideas, model development, implementation, and analyses were conducted and validated by the authors.

\bibliographystyle{ACM-Reference-Format}
\bibliography{dml.bib}


\begin{thebibliography}{61}


\ifx \showCODEN    \undefined \def \showCODEN     #1{\unskip}     \fi
\ifx \showISBNx    \undefined \def \showISBNx     #1{\unskip}     \fi
\ifx \showISBNxiii \undefined \def \showISBNxiii  #1{\unskip}     \fi
\ifx \showISSN     \undefined \def \showISSN      #1{\unskip}     \fi
\ifx \showLCCN     \undefined \def \showLCCN      #1{\unskip}     \fi
\ifx \shownote     \undefined \def \shownote      #1{#1}          \fi
\ifx \showarticletitle \undefined \def \showarticletitle #1{#1}   \fi
\ifx \showURL      \undefined \def \showURL       {\relax}        \fi
\providecommand\bibfield[2]{#2}
\providecommand\bibinfo[2]{#2}
\providecommand\natexlab[1]{#1}
\providecommand\showeprint[2][]{arXiv:#2}

\bibitem[Abadie(2003)]%
        {401k_2003}
\bibfield{author}{\bibinfo{person}{Alberto Abadie}.} \bibinfo{year}{2003}\natexlab{}.
\newblock \showarticletitle{Semiparametric instrumental variable estimation of treatment response models}.
\newblock \bibinfo{journal}{\emph{Journal of Econometrics}} \bibinfo{volume}{113}, \bibinfo{number}{2} (\bibinfo{year}{2003}), \bibinfo{pages}{231--263}.
\newblock


\bibitem[Abadie et~al\mbox{.}(2025)]%
        {dr_causal_latent_factor_model_2024}
\bibfield{author}{\bibinfo{person}{Alberto Abadie}, \bibinfo{person}{Anish Agarwal}, \bibinfo{person}{Raaz Dwivedi}, {and} \bibinfo{person}{Abhin Shah}.} \bibinfo{year}{2025}\natexlab{}.
\newblock \showarticletitle{Doubly Robust Inference in Causal Latent Factor Models}.
\newblock \bibinfo{journal}{\emph{arXiv preprint arXiv:2402.11652}} (\bibinfo{year}{2025}).
\newblock


\bibitem[Amjad et~al\mbox{.}(2018)]%
        {causal_robust_synthetic_control_2018}
\bibfield{author}{\bibinfo{person}{Muhammad~Jehangir Amjad}, \bibinfo{person}{Devavrat Shah}, {and} \bibinfo{person}{Dennis Shen}.} \bibinfo{year}{2018}\natexlab{}.
\newblock \showarticletitle{Robust Synthetic Control}.
\newblock \bibinfo{journal}{\emph{Journal of Machine Learning Research}} \bibinfo{volume}{19}, \bibinfo{number}{22} (\bibinfo{year}{2018}), \bibinfo{pages}{1--51}.
\newblock


\bibitem[Ashman et~al\mbox{.}(2023)]%
        {causal_reasoning_latent_confounders_2023}
\bibfield{author}{\bibinfo{person}{Matthew Ashman}, \bibinfo{person}{Chao Ma}, \bibinfo{person}{Agrin Hilmkil}, \bibinfo{person}{Joel Jennings}, {and} \bibinfo{person}{Cheng Zhang}.} \bibinfo{year}{2023}\natexlab{}.
\newblock \showarticletitle{Causal Reasoning in the Presence of Latent Confounders via Neural ADMG Learning}. In \bibinfo{booktitle}{\emph{International Conference on Learning Representations}}.
\newblock


\bibitem[Athey(2017)]%
        {Athey_essay_2017}
\bibfield{author}{\bibinfo{person}{Susan Athey}.} \bibinfo{year}{2017}\natexlab{}.
\newblock \showarticletitle{Beyond prediction: Using big data for policy problems}.
\newblock \bibinfo{journal}{\emph{Science}} \bibinfo{volume}{355}, \bibinfo{number}{6324} (\bibinfo{year}{2017}), \bibinfo{pages}{483--485}.
\newblock


\bibitem[Bach et~al\mbox{.}(2023)]%
        {doubleml_401k}
\bibfield{author}{\bibinfo{person}{Philipp Bach}, \bibinfo{person}{Victor Chernozhukov}, \bibinfo{person}{Sven Klaassen}, \bibinfo{person}{Malte~S. Kurz}, {and} \bibinfo{person}{Martin Spindler}.} \bibinfo{year}{2023}\natexlab{}.
\newblock \bibinfo{title}{DoubleML: Data set on financial wealth and 401(k) plan participation}.
\newblock \bibinfo{howpublished}{\url{https://docs.doubleml.org/stable/api/generated/doubleml.datasets.fetch_401K.html}}.
\newblock


\bibitem[Battey and Reid(2024)]%
        {battey2024role}
\bibfield{author}{\bibinfo{person}{Heather~S Battey} {and} \bibinfo{person}{Nancy Reid}.} \bibinfo{year}{2024}\natexlab{}.
\newblock \showarticletitle{On the role of parameterization in models with a misspecified nuisance component}.
\newblock \bibinfo{journal}{\emph{Proceedings of the National Academy of Sciences}} \bibinfo{volume}{121}, \bibinfo{number}{36} (\bibinfo{year}{2024}), \bibinfo{pages}{e2402736121}.
\newblock


\bibitem[Beran(1977)]%
        {beran1977minimum}
\bibfield{author}{\bibinfo{person}{Rudolf Beran}.} \bibinfo{year}{1977}\natexlab{}.
\newblock \showarticletitle{Minimum Hellinger distance estimates for parametric models}.
\newblock \bibinfo{journal}{\emph{The Annals of Statistics}} \bibinfo{volume}{5}, \bibinfo{number}{3} (\bibinfo{year}{1977}), \bibinfo{pages}{445--463}.
\newblock


\bibitem[Bishop(2006)]%
        {Bishop06_PRML}
\bibfield{author}{\bibinfo{person}{Christopher~M. Bishop}.} \bibinfo{year}{2006}\natexlab{}.
\newblock \bibinfo{booktitle}{\emph{Pattern Recognition and Machine Learning}}.
\newblock \bibinfo{publisher}{Springer}.
\newblock


\bibitem[Brodersen et~al\mbox{.}(2015)]%
        {causal_ad_time_series_2015}
\bibfield{author}{\bibinfo{person}{Kay~H. Brodersen}, \bibinfo{person}{Fabian Gallusser}, \bibinfo{person}{Jim Koehler}, \bibinfo{person}{Nicolas Remy}, {and} \bibinfo{person}{Steven~L. Scott}.} \bibinfo{year}{2015}\natexlab{}.
\newblock \showarticletitle{Inferring causal impact using Bayesian structural time-series models}.
\newblock \bibinfo{journal}{\emph{Annals of Applied Statistics}}  \bibinfo{volume}{9} (\bibinfo{year}{2015}), \bibinfo{pages}{247--274}.
\newblock


\bibitem[Buja et~al\mbox{.}(019b)]%
        {buja2019models2}
\bibfield{author}{\bibinfo{person}{Andreas Buja}, \bibinfo{person}{Richard Berk}, \bibinfo{person}{Lawrence Brown}, \bibinfo{person}{Edward George}, \bibinfo{person}{Emil Pitkin}, \bibinfo{person}{Mark Traskin}, \bibinfo{person}{Linda Zhao}, {and} \bibinfo{person}{Kai Zhang}.} \bibinfo{year}{2019b}\natexlab{}.
\newblock \showarticletitle{Models as approximations II: A model-free theory of parametric regression}.
\newblock \bibinfo{journal}{\emph{Statist. Sci.}} \bibinfo{volume}{34}, \bibinfo{number}{4} (\bibinfo{year}{2019b}), \bibinfo{pages}{545--565}.
\newblock


\bibitem[Buja et~al\mbox{.}(019a)]%
        {buja2019models1}
\bibfield{author}{\bibinfo{person}{Andreas Buja}, \bibinfo{person}{Lawrence Brown}, \bibinfo{person}{Richard Berk}, \bibinfo{person}{Edward George}, \bibinfo{person}{Emil Pitkin}, \bibinfo{person}{Mark Traskin}, \bibinfo{person}{Kai Zhang}, {and} \bibinfo{person}{Linda Zhao}.} \bibinfo{year}{2019a}\natexlab{}.
\newblock \showarticletitle{Models as approximations I: Consequences illustrated with linear regression}.
\newblock \bibinfo{journal}{\emph{Statist. Sci.}} \bibinfo{volume}{34}, \bibinfo{number}{4} (\bibinfo{year}{2019a}), \bibinfo{pages}{523--544}.
\newblock


\bibitem[Castro et~al\mbox{.}(2020)]%
        {causality_medical_imaging_2020}
\bibfield{author}{\bibinfo{person}{Daniel~C. Castro}, \bibinfo{person}{Ian Walker}, {and} \bibinfo{person}{Ben Glocker}.} \bibinfo{year}{2020}\natexlab{}.
\newblock \showarticletitle{Causality matters in medical imaging}.
\newblock \bibinfo{journal}{\emph{Nature Communications}} \bibinfo{volume}{11}, \bibinfo{number}{3673} (\bibinfo{year}{2020}).
\newblock


\bibitem[Chang(2020)]%
        {dml_diff_in_diff_2020}
\bibfield{author}{\bibinfo{person}{Neng-Chieh Chang}.} \bibinfo{year}{2020}\natexlab{}.
\newblock \showarticletitle{Double/debiased machine learning for difference-in-differences models}.
\newblock \bibinfo{journal}{\emph{The Econometrics Journal}}  \bibinfo{volume}{23} (\bibinfo{year}{2020}), \bibinfo{pages}{177–--191}.
\newblock
Issue 2.


\bibitem[Chernozhukov et~al\mbox{.}(2018)]%
        {dml_2018}
\bibfield{author}{\bibinfo{person}{Victor Chernozhukov}, \bibinfo{person}{Denis Chetverikov}, \bibinfo{person}{Mert Demirer}, \bibinfo{person}{Esther~Duflo andChristian Hansen}, \bibinfo{person}{Whitney Newey}, {and} \bibinfo{person}{James Robins}.} \bibinfo{year}{2018}\natexlab{}.
\newblock \showarticletitle{Double/debiased machine learning for treatment and structural parameters}.
\newblock \bibinfo{journal}{\emph{The Econometrics Journal}}  \bibinfo{volume}{21} (\bibinfo{year}{2018}), \bibinfo{pages}{C1--C68}.
\newblock
Issue 1.


\bibitem[Chernozhukov et~al\mbox{.}(018b)]%
        {chernozhukov2018doubledebiased}
\bibfield{author}{\bibinfo{person}{Victor Chernozhukov}, \bibinfo{person}{Denis Chetverikov}, \bibinfo{person}{Mert Demirer}, \bibinfo{person}{Esther Duflo}, \bibinfo{person}{Christian Hansen}, \bibinfo{person}{Whitney Newey}, {and} \bibinfo{person}{James Robins}.} \bibinfo{year}{2018b}\natexlab{}.
\newblock \showarticletitle{Double/debiased machine learning for treatment and structural parameters}.
\newblock \bibinfo{journal}{\emph{The Econometrics Journal}} \bibinfo{volume}{21}, \bibinfo{number}{1} (\bibinfo{year}{2018b}), \bibinfo{pages}{C1--C68}.
\newblock


\bibitem[Chernozhukov et~al\mbox{.}(2021)]%
        {Omitted_variable_bias_2022}
\bibfield{author}{\bibinfo{person}{Victor Chernozhukov}, \bibinfo{person}{Carlos Cinelli}, \bibinfo{person}{Whitney Newey}, \bibinfo{person}{Amit Sharma}, {and} \bibinfo{person}{Vasilis Syrgkanis}.} \bibinfo{year}{2021}\natexlab{}.
\newblock \showarticletitle{Long Story Short: Omitted Variable Bias in Causal Machine Learning}.
\newblock \bibinfo{journal}{\emph{arXiv preprint arXiv:2112.13398}} (\bibinfo{year}{2021}).
\newblock


\bibitem[Chernozhukov et~al\mbox{.}(2020)]%
        {riesz_net_2020}
\bibfield{author}{\bibinfo{person}{Victor Chernozhukov}, \bibinfo{person}{Whitney Newey}, \bibinfo{person}{Rahul Singh}, {and} \bibinfo{person}{Vasilis Syrgkanis}.} \bibinfo{year}{2020}\natexlab{}.
\newblock \showarticletitle{Adversarial Estimation of Riesz Representers}.
\newblock \bibinfo{journal}{\emph{arXiv preprint arXiv:2101.00009}} (\bibinfo{year}{2020}).
\newblock


\bibitem[Chiang et~al\mbox{.}(2021)]%
        {dml_multiway_cluster_2021}
\bibfield{author}{\bibinfo{person}{Harold~D. Chiang}, \bibinfo{person}{Kengo Kato}, \bibinfo{person}{Yukun Ma}, {and} \bibinfo{person}{Yuya Sasaki}.} \bibinfo{year}{2021}\natexlab{}.
\newblock \showarticletitle{Multiway Cluster Robust Double/Debiased Machine Learning}.
\newblock \bibinfo{journal}{\emph{Journal of Business \& Economic Statistics}}  \bibinfo{volume}{40} (\bibinfo{year}{2021}), \bibinfo{pages}{1046--1056}.
\newblock
Issue 3.


\bibitem[Colangelo and Lee(2020)]%
        {dml_kernel_2020}
\bibfield{author}{\bibinfo{person}{Kyle Colangelo} {and} \bibinfo{person}{Ying-Ying Lee}.} \bibinfo{year}{2020}\natexlab{}.
\newblock \showarticletitle{Double debiased machine learning nonparametric inference with continuous treatments}.
\newblock \bibinfo{journal}{\emph{arXiv preprint arXiv:2004.03036}} (\bibinfo{year}{2020}).
\newblock


\bibitem[Cuellar and Kennedy(2020)]%
        {cuellar2020propensity}
\bibfield{author}{\bibinfo{person}{Marleny Cuellar} {and} \bibinfo{person}{Edward~H. Kennedy}.} \bibinfo{year}{2020}\natexlab{}.
\newblock \showarticletitle{Propensity score estimation with boosted regression for evaluating causal effects in observational studies}.
\newblock \bibinfo{journal}{\emph{Observational Studies}}  \bibinfo{volume}{6} (\bibinfo{year}{2020}), \bibinfo{pages}{80--98}.
\newblock


\bibitem[Davis and Heller(2017)]%
        {ignorability_example_2017}
\bibfield{author}{\bibinfo{person}{Jonathan~M.V. Davis} {and} \bibinfo{person}{Sara~B. Heller}.} \bibinfo{year}{2017}\natexlab{}.
\newblock \showarticletitle{Using Causal Forests to Predict Treatment Heterogeneity: An Application to Summer Jobs}.
\newblock \bibinfo{journal}{\emph{American Economic Review}} \bibinfo{volume}{107}, \bibinfo{number}{5} (\bibinfo{year}{2017}), \bibinfo{pages}{546--550}.
\newblock


\bibitem[er~Chen et~al\mbox{.}(2021)]%
        {dml_ignorability_example_2021}
\bibfield{author}{\bibinfo{person}{Jau er Chen}, \bibinfo{person}{Chien-Hsun Huang}, {and} \bibinfo{person}{Jia-Jyun Tien}.} \bibinfo{year}{2021}\natexlab{}.
\newblock \showarticletitle{Debiased/Double Machine Learning for Instrumental Variable Quantile Regressions}.
\newblock \bibinfo{journal}{\emph{Econometrics}}  \bibinfo{volume}{9} (\bibinfo{year}{2021}), \bibinfo{pages}{1--18}.
\newblock
Issue 2.


\bibitem[Farrell et~al\mbox{.}(2021)]%
        {dnn_causal_2021}
\bibfield{author}{\bibinfo{person}{Max~H. Farrell}, \bibinfo{person}{Tengyuan Liang}, {and} \bibinfo{person}{Sanjog Misra}.} \bibinfo{year}{2021}\natexlab{}.
\newblock \showarticletitle{Deep Neural Networks for Estimation and Inference}.
\newblock \bibinfo{journal}{\emph{Econometrica}}  \bibinfo{volume}{89} (\bibinfo{year}{2021}), \bibinfo{pages}{181--213}.
\newblock
Issue 1.


\bibitem[Fingerhut et~al\mbox{.}(2022)]%
        {coordinated_dml_2022}
\bibfield{author}{\bibinfo{person}{Nitai Fingerhut}, \bibinfo{person}{Matteo Sesia}, {and} \bibinfo{person}{Yaniv Romano}.} \bibinfo{year}{2022}\natexlab{}.
\newblock \showarticletitle{Coordinated Double Machine Learning}. In \bibinfo{booktitle}{\emph{International Conference on Machine Learning}}.
\newblock


\bibitem[Frauen et~al\mbox{.}(2023)]%
        {causality_patient_trajectory_2023}
\bibfield{author}{\bibinfo{person}{Dennis Frauen}, \bibinfo{person}{Tobias Hatt}, \bibinfo{person}{Valentyn Melnychuk}, {and} \bibinfo{person}{Stefan Feuerriegel}.} \bibinfo{year}{2023}\natexlab{}.
\newblock \showarticletitle{Estimating Average Causal Effects from Patient Trajectories}. In \bibinfo{booktitle}{\emph{Proceedings of the AAAI Conference on Artificial Intelligence}}, Vol.~\bibinfo{volume}{37}.
\newblock
Issue 6.


\bibitem[Gelman et~al\mbox{.}(2013)]%
        {bayesian_data_analysis_book_2013}
\bibfield{author}{\bibinfo{person}{Andrew Gelman}, \bibinfo{person}{John~B. Carlin}, \bibinfo{person}{Hal~S. Stern}, \bibinfo{person}{David~B. Dunson}, \bibinfo{person}{Aki Vehtari}, {and} \bibinfo{person}{Donald~B. Rubin}.} \bibinfo{year}{2013}\natexlab{}.
\newblock \bibinfo{booktitle}{\emph{Bayesian Data Analysis} (\bibinfo{edition}{3rd} ed.)}.
\newblock \bibinfo{publisher}{Chapman and Hall}.
\newblock


\bibitem[Gilchrist and Sands(2016)]%
        {ignorability_example_2016}
\bibfield{author}{\bibinfo{person}{Duncan~Sheppard Gilchrist} {and} \bibinfo{person}{Emily~Glassberg Sands}.} \bibinfo{year}{2016}\natexlab{}.
\newblock \showarticletitle{Something to Talk About: Social Spillovers in Movie Consumption}.
\newblock \bibinfo{journal}{\emph{Journal of Political Economy}} \bibinfo{volume}{124}, \bibinfo{number}{5} (\bibinfo{year}{2016}).
\newblock


\bibitem[Harada and Fujisawa(2024)]%
        {robust_causal_inference_2024}
\bibfield{author}{\bibinfo{person}{Kazuharu Harada} {and} \bibinfo{person}{Hironori Fujisawa}.} \bibinfo{year}{2024}\natexlab{}.
\newblock \showarticletitle{Outlier-Resistant Estimators for Average Treatment Effect in Causal Inference}.
\newblock \bibinfo{journal}{\emph{Statistica Sinica}}  \bibinfo{volume}{34} (\bibinfo{year}{2024}), \bibinfo{pages}{133--155}.
\newblock


\bibitem[Hastie et~al\mbox{.}(2009)]%
        {Hastie09a}
\bibfield{author}{\bibinfo{person}{Trevor Hastie}, \bibinfo{person}{Robert Tibshirani}, {and} \bibinfo{person}{Jerome Friedman}.} \bibinfo{year}{2009}\natexlab{}.
\newblock \bibinfo{booktitle}{\emph{The elements of statistical learning: {D}ata mining, inference, and prediction}}.
\newblock \bibinfo{publisher}{Springer}.
\newblock


\bibitem[Holland(1986)]%
        {Holland_causal_1986}
\bibfield{author}{\bibinfo{person}{Paul~W. Holland}.} \bibinfo{year}{1986}\natexlab{}.
\newblock \showarticletitle{Statistics and Causal Inference}.
\newblock \bibinfo{journal}{\emph{J. Amer. Statist. Assoc.}}  \bibinfo{volume}{81} (\bibinfo{year}{1986}), \bibinfo{pages}{945--960}.
\newblock
Issue 396.


\bibitem[Huber(1967)]%
        {huber1967behavior}
\bibfield{author}{\bibinfo{person}{Peter~J. Huber}.} \bibinfo{year}{1967}\natexlab{}.
\newblock \showarticletitle{The behavior of maximum likelihood estimates under nonstandard conditions}.
\newblock \bibinfo{journal}{\emph{Proceedings of the Fifth Berkeley Symposium on Mathematical Statistics and Probability}}  \bibinfo{volume}{1} (\bibinfo{year}{1967}), \bibinfo{pages}{221--233}.
\newblock


\bibitem[Imbens and Wooldridge(2009)]%
        {Imbens_econometrics_2009}
\bibfield{author}{\bibinfo{person}{Guido~M. Imbens} {and} \bibinfo{person}{Jeffrey~M. Wooldridge}.} \bibinfo{year}{2009}\natexlab{}.
\newblock \showarticletitle{Recent Developments in the Econometrics of Program Evaluation}.
\newblock \bibinfo{journal}{\emph{Journal of Economic Literature}} \bibinfo{volume}{47}, \bibinfo{number}{1} (\bibinfo{year}{2009}), \bibinfo{pages}{5--86}.
\newblock


\bibitem[Kallus and Uehara(2020)]%
        {dml_ope_2020}
\bibfield{author}{\bibinfo{person}{Nathan Kallus} {and} \bibinfo{person}{Masatoshi Uehara}.} \bibinfo{year}{2020}\natexlab{}.
\newblock \showarticletitle{Double Reinforcement Learning for Efficient Off-Policy Evaluation in {M}arkov Decision Processes}.
\newblock \bibinfo{journal}{\emph{Journal of Machine Learning Research}}  \bibinfo{volume}{21} (\bibinfo{year}{2020}), \bibinfo{pages}{1--63}.
\newblock
Issue 1.


\bibitem[Kennedy et~al\mbox{.}(2019)]%
        {kennedy2019robust}
\bibfield{author}{\bibinfo{person}{Edward~H. Kennedy}, \bibinfo{person}{Zeyu Ma}, \bibinfo{person}{Matthew~D. McHugh}, {and} \bibinfo{person}{Dylan~S. Small}.} \bibinfo{year}{2019}\natexlab{}.
\newblock \showarticletitle{Robust inference for generalized synthetic controls}.
\newblock \bibinfo{journal}{\emph{J. Amer. Statist. Assoc.}} \bibinfo{volume}{114}, \bibinfo{number}{527} (\bibinfo{year}{2019}), \bibinfo{pages}{1122--1131}.
\newblock


\bibitem[Kivva et~al\mbox{.}(2023)]%
        {cross_moment_causal_2023}
\bibfield{author}{\bibinfo{person}{Yaroslav Kivva}, \bibinfo{person}{}, \bibinfo{person}{Saber Salehkaleybar}, {and} \bibinfo{person}{Negar Kiyavash}.} \bibinfo{year}{2023}\natexlab{}.
\newblock \showarticletitle{A Cross-Moment Approach for Causal Effect Estimation}. In \bibinfo{booktitle}{\emph{Advances in Neural Information Processing Systems}}.
\newblock


\bibitem[Kocacoban and Cussens(2019)]%
        {online_causal_structure_learning_latent_confounders_2023}
\bibfield{author}{\bibinfo{person}{Durdane Kocacoban} {and} \bibinfo{person}{James Cussens}.} \bibinfo{year}{2019}\natexlab{}.
\newblock \showarticletitle{Online Causal Structure Learning in the Presence of Latent Variables}. In \bibinfo{booktitle}{\emph{IEEE International Conference On Machine Learning And Applications}}.
\newblock


\bibitem[Kurz(2021)]%
        {dml_parallel_compute_2021}
\bibfield{author}{\bibinfo{person}{Malte~S. Kurz}.} \bibinfo{year}{2021}\natexlab{}.
\newblock \showarticletitle{Distributed Double Machine Learning with a Serverless Architecture}. In \bibinfo{booktitle}{\emph{International Conference on Performance Engineering}}.
\newblock


\bibitem[Lanza and Cooper(2016)]%
        {lca_developmental_research_2016}
\bibfield{author}{\bibinfo{person}{Stephanie~T. Lanza} {and} \bibinfo{person}{Brittany~R. Cooper}.} \bibinfo{year}{2016}\natexlab{}.
\newblock \showarticletitle{Latent Class Analysis for Developmental Research}.
\newblock \bibinfo{journal}{\emph{Child Development Perspectives}}  \bibinfo{volume}{10} (\bibinfo{year}{2016}), \bibinfo{pages}{59--64}.
\newblock
Issue 1.


\bibitem[Lewis and Syrgkanis(2021)]%
        {dml_dynamic_effect_2021}
\bibfield{author}{\bibinfo{person}{Greg Lewis} {and} \bibinfo{person}{Vasilis Syrgkanis}.} \bibinfo{year}{2021}\natexlab{}.
\newblock \showarticletitle{Double/Debiased Machine Learning for Dynamic Treatment Effects}. In \bibinfo{booktitle}{\emph{Advances in Neural Information Processing Systems}}.
\newblock


\bibitem[Louizos et~al\mbox{.}(2017)]%
        {causal_vae_2017}
\bibfield{author}{\bibinfo{person}{Christos Louizos}, \bibinfo{person}{Uri Shalit}, \bibinfo{person}{Joris Mooij}, \bibinfo{person}{David Sontag}, \bibinfo{person}{Richard Zemel}, {and} \bibinfo{person}{Max Welling}.} \bibinfo{year}{2017}\natexlab{}.
\newblock \showarticletitle{Causal Effect Inference Deep Latent-Variable Models}. In \bibinfo{booktitle}{\emph{Advances in Neural Information Processing Systems}}.
\newblock


\bibitem[Morzywolek et~al\mbox{.}(2023)]%
        {morzywolek2023general}
\bibfield{author}{\bibinfo{person}{Pawel Morzywolek}, \bibinfo{person}{Johan Decruyenaere}, {and} \bibinfo{person}{Stijn Vansteelandt}.} \bibinfo{year}{2023}\natexlab{}.
\newblock \showarticletitle{On a general class of orthogonal learners for the estimation of heterogeneous treatment effects}.
\newblock \bibinfo{journal}{\emph{arXiv preprint arXiv:2303.12687}} (\bibinfo{year}{2023}).
\newblock


\bibitem[Narita et~al\mbox{.}(2021)]%
        {dml_recom_2021}
\bibfield{author}{\bibinfo{person}{Yusuke Narita}, \bibinfo{person}{Shota Yasui}, {and} \bibinfo{person}{Kohei Yata}.} \bibinfo{year}{2021}\natexlab{}.
\newblock \showarticletitle{Debiased Off-Policy Evaluation for Recommendation Systems}. In \bibinfo{booktitle}{\emph{ACM Conference on Recommender Systems}}.
\newblock


\bibitem[Neugebauer and van~der Laan(2007)]%
        {neugebauer2007inverse}
\bibfield{author}{\bibinfo{person}{Romain Neugebauer} {and} \bibinfo{person}{Mark~J. van~der Laan}.} \bibinfo{year}{2007}\natexlab{}.
\newblock \showarticletitle{Inverse probability cross-validated targeted maximum-likelihood estimation}.
\newblock \bibinfo{journal}{\emph{The International Journal of Biostatistics}} \bibinfo{volume}{3}, \bibinfo{number}{1} (\bibinfo{year}{2007}).
\newblock


\bibitem[Pearl(2009)]%
        {Pearl_causal_2009}
\bibfield{author}{\bibinfo{person}{Judea Pearl}.} \bibinfo{year}{2009}\natexlab{}.
\newblock \bibinfo{booktitle}{\emph{Causality}}.
\newblock \bibinfo{publisher}{Cambridge University Press}.
\newblock


\bibitem[Quintas-Martinez et~al\mbox{.}(2022)]%
        {Riesz_debiased_ml_2022}
\bibfield{author}{\bibinfo{person}{Victor Quintas-Martinez}, \bibinfo{person}{Victor Chernozhukov}, \bibinfo{person}{Vasilis Syrgkanis}, {and} \bibinfo{person}{Whitney Newey}.} \bibinfo{year}{2022}\natexlab{}.
\newblock \showarticletitle{RieszNet and ForestRiesz: Automatic Debiased Machine Learning with Neural Nets and Random Forest}. In \bibinfo{booktitle}{\emph{International Conference on Machine Learning}}.
\newblock


\bibitem[Quinzan et~al\mbox{.}(2023)]%
        {DRCFS_icml2023}
\bibfield{author}{\bibinfo{person}{Francesco Quinzan}, \bibinfo{person}{Ashkan Soleymani}, \bibinfo{person}{Patrick Jaillet}, \bibinfo{person}{Cristian~R. Rojas}, {and} \bibinfo{person}{Stefan Bauer}.} \bibinfo{year}{2023}\natexlab{}.
\newblock \showarticletitle{DRCFS: Doubly Robust Causal Feature Selection}. In \bibinfo{booktitle}{\emph{International Conference on Machine Learning}}.
\newblock


\bibitem[Rakhlin et~al\mbox{.}(2017)]%
        {rakhlin2017empirical}
\bibfield{author}{\bibinfo{person}{Alexander Rakhlin}, \bibinfo{person}{Nathan Srebro}, {and} \bibinfo{person}{Karthik Sridharan}.} \bibinfo{year}{2017}\natexlab{}.
\newblock \showarticletitle{Empirical entropy, minimax regret and minimax risk}.
\newblock \bibinfo{journal}{\emph{Bernoulli}} \bibinfo{volume}{23}, \bibinfo{number}{2} (\bibinfo{year}{2017}), \bibinfo{pages}{789--824}.
\newblock


\bibitem[Rinaldo and Wasserman(2010)]%
        {rinaldo2010generalized}
\bibfield{author}{\bibinfo{person}{Alessandro Rinaldo} {and} \bibinfo{person}{Larry Wasserman}.} \bibinfo{year}{2010}\natexlab{}.
\newblock \showarticletitle{Generalized density clustering}.
\newblock \bibinfo{journal}{\emph{Annals of Statistics}} \bibinfo{volume}{38}, \bibinfo{number}{5} (\bibinfo{year}{2010}), \bibinfo{pages}{2678--2722}.
\newblock


\bibitem[Robinson(1988)]%
        {Robinson_1988}
\bibfield{author}{\bibinfo{person}{Peter~M. Robinson}.} \bibinfo{year}{1988}\natexlab{}.
\newblock \showarticletitle{Root-N-Consistent Semiparametric Regression}.
\newblock \bibinfo{journal}{\emph{Econometrica}} \bibinfo{volume}{56}, \bibinfo{number}{4} (\bibinfo{year}{1988}), \bibinfo{pages}{931--954}.
\newblock


\bibitem[Rothenberg(1971)]%
        {rothenberg1971identification}
\bibfield{author}{\bibinfo{person}{Thomas~J Rothenberg}.} \bibinfo{year}{1971}\natexlab{}.
\newblock \showarticletitle{Identification in parametric models}.
\newblock \bibinfo{journal}{\emph{Econometrica: Journal of the Econometric Society}} (\bibinfo{year}{1971}), \bibinfo{pages}{577--591}.
\newblock


\bibitem[Semenova and Chernozhukov(2021)]%
        {dml_conditional_effect_2021}
\bibfield{author}{\bibinfo{person}{Vira Semenova} {and} \bibinfo{person}{Victor Chernozhukov}.} \bibinfo{year}{2021}\natexlab{}.
\newblock \showarticletitle{Debiased Machine Learning of Conditional Average Treatment Effects and Other Causal Functions}.
\newblock \bibinfo{journal}{\emph{The Econometrics Journal}}  \bibinfo{volume}{24} (\bibinfo{year}{2021}), \bibinfo{pages}{264--289}.
\newblock
Issue 2.


\bibitem[Shams and Beierholm(2022)]%
        {bayes_causal_inference_2022}
\bibfield{author}{\bibinfo{person}{Ladan Shams} {and} \bibinfo{person}{Ulrik Beierholm}.} \bibinfo{year}{2022}\natexlab{}.
\newblock \showarticletitle{Bayesian causal inference: A unifying neuroscience theory}.
\newblock \bibinfo{journal}{\emph{Neuroscience \& Biobehavioral Reviews}}  \bibinfo{volume}{137} (\bibinfo{year}{2022}).
\newblock


\bibitem[Soleymani et~al\mbox{.}(2023)]%
        {Causal_feature_selection_tmlr2023}
\bibfield{author}{\bibinfo{person}{Ashkan Soleymani}, \bibinfo{person}{Anant Raj}, \bibinfo{person}{Stefan Bauer}, \bibinfo{person}{Bernhard Schölkopf}, {and} \bibinfo{person}{Michel Besserve}.} \bibinfo{year}{2023}\natexlab{}.
\newblock \showarticletitle{Causal Feature Selection via Orthogonal Search}.
\newblock \bibinfo{journal}{\emph{Transactions on Machine Learning Research}} (\bibinfo{year}{2023}).
\newblock


\bibitem[Strobl and Lasko(2023)]%
        {lvm_root_causal_inference_2023}
\bibfield{author}{\bibinfo{person}{Eric~V. Strobl} {and} \bibinfo{person}{Thomas~A. Lasko}.} \bibinfo{year}{2023}\natexlab{}.
\newblock \showarticletitle{Sample-Specific Root Causal Inference with Latent Variables}. In \bibinfo{booktitle}{\emph{Conference on Causal Learning and Reasoning}}.
\newblock


\bibitem[van~der Laan and Rubin(2006)]%
        {laan2006targeted}
\bibfield{author}{\bibinfo{person}{Mark~J. van~der Laan} {and} \bibinfo{person}{Daniel Rubin}.} \bibinfo{year}{2006}\natexlab{}.
\newblock \showarticletitle{Targeted maximum likelihood estimation of a parameter of a marginal structural model}.
\newblock \bibinfo{journal}{\emph{The International Journal of Biostatistics}} \bibinfo{volume}{2}, \bibinfo{number}{1} (\bibinfo{year}{2006}).
\newblock


\bibitem[Weller et~al\mbox{.}(2020)]%
        {lca_2020}
\bibfield{author}{\bibinfo{person}{Bridget~E. Weller}, \bibinfo{person}{Natasha~K. Bowen}, {and} \bibinfo{person}{Sarah~J. Faubert}.} \bibinfo{year}{2020}\natexlab{}.
\newblock \showarticletitle{Latent Class Analysis: A Guide to Best Practice}.
\newblock \bibinfo{journal}{\emph{Journal of Black Psychology}}  \bibinfo{volume}{46} (\bibinfo{year}{2020}).
\newblock
Issue 4.


\bibitem[White(1980)]%
        {white1980using}
\bibfield{author}{\bibinfo{person}{Halbert White}.} \bibinfo{year}{1980}\natexlab{}.
\newblock \showarticletitle{Using least squares to approximate unknown regression functions}.
\newblock \bibinfo{journal}{\emph{International Economic Review}} (\bibinfo{year}{1980}), \bibinfo{pages}{149--170}.
\newblock


\bibitem[White(1982)]%
        {white1982maximum}
\bibfield{author}{\bibinfo{person}{Halbert White}.} \bibinfo{year}{1982}\natexlab{}.
\newblock \showarticletitle{Maximum likelihood estimation of misspecified models}.
\newblock \bibinfo{journal}{\emph{Econometrica}} (\bibinfo{year}{1982}), \bibinfo{pages}{1--25}.
\newblock


\bibitem[Yao et~al\mbox{.}(2021)]%
        {causal_inference_survey_2021}
\bibfield{author}{\bibinfo{person}{Liuyi Yao}, \bibinfo{person}{Zhixuan Chu}, \bibinfo{person}{Sheng Li}, \bibinfo{person}{Yaliang Li}, \bibinfo{person}{Jing Gao}, {and} \bibinfo{person}{Aidong Zhang}.} \bibinfo{year}{2021}\natexlab{}.
\newblock \showarticletitle{A Survey on Causal Inference}.
\newblock \bibinfo{journal}{\emph{ACM Transactions on Knowledge Discovery from Data}} \bibinfo{volume}{15}, \bibinfo{number}{74} (\bibinfo{year}{2021}), \bibinfo{pages}{1--46}.
\newblock
Issue 5.


\bibitem[Zhang et~al\mbox{.}(2021)]%
        {uplift_survey_2021}
\bibfield{author}{\bibinfo{person}{Weijia Zhang}, \bibinfo{person}{Jiuyong Li}, {and} \bibinfo{person}{Lin Liu}.} \bibinfo{year}{2021}\natexlab{}.
\newblock \showarticletitle{A Unified Survey of Treatment Effect Heterogeneity Modelling and Uplift Modelling}.
\newblock \bibinfo{journal}{\emph{Comput. Surveys}} \bibinfo{volume}{54}, \bibinfo{number}{162} (\bibinfo{year}{2021}), \bibinfo{pages}{1--36}.
\newblock
Issue 8.


\end{thebibliography}


\onecolumn

\appendix
\setcounter{section}{1}

\section{Extended Appendix}
\label{app:extended_appendix}
We provide additional information to complement the main paper, including experimental settings and theoretical analysis.

\subsection{Additional experimental settings}
We describe additional experimental settings in Section~\ref{app:synthetic} for the synthetic experiments presented in Section~\ref{sec:experiment_synthetic} of the main paper.
Details on the compute resources used are provided in Section~\ref{app:experiment-resource}.
We also include the full list of covariates used in the real data analysis of advertisement timing (Table~\ref{tab:real_data_apprication_covariates}).

\subsubsection{Synthetic data}
\label{app:synthetic}
Our experiments utilized a synthetic data generation process that closely simulates real-world complexities. 
We modeled nuisance components $m$ and $g$ as sparse linear models with $100$ covariates, 
using coefficients drawn from a normal distribution to reflect real-world data sparsity.
The noise distributions basically follow Eqs.~\eqref{eq:u_expon} and \eqref{eq:confounder_latent_noise}.
The data generator was configured to create datasets with specific attributes such as sample size $300$, 
number of covariates $100$, true causal effect $1.0$.
We used this setup to conduct two sets of experiments: 
one verifying the outcome latent DML and another focusing on the confounder latent DML method.

For the first experiment, the noise $U$ in the outcome includes an additional noise generated by the exponential distribution with a mean of $5.0$. For the second, we examined cases with both positive and negative correlations between confounders and outcomes, where the confounders are discrete and generated as Bernoulli-distributed variables with a randomly specified success rate $q$.
Specifically, in the positive correlation case, the parameters $a$ and $b$ in Eq.~\eqref{eq:confounder_latent_noise} were both set to $2.0$, while in the negative correlation case, they were set to $2.0$ and $-2.0$, respectively.
In both experiments, the standard deviations of the outcome and treatment noises were set to $1.0$ and $0.5$, respectively.
This comprehensive approach allowed for an in-depth evaluation of our latent DML framework, demonstrating its capacity to capture causal effects accurately in a range of simulated environments.

\subsubsection{Compute Resources}
\label{app:experiment-resource}

All experiments were conducted on a standard local machine with 2.3GHz Quad-Core Intel Core i7 CPU and 32 GB Memory.
The computational time for each experimental run varied between 2 to 5 hours depending on the complexity of the experiment.
Given the relatively modest computational requirements, all experiments were feasible to run on this local machine without the need for specialized hardware or cloud resources.
\newpage

\begin{table*}[ht]
\centering
\caption{List of covariates for real data application: causal effect analysis of advertisement timing to ROAS}
\label{tab:real_data_apprication_covariates}
\begin{tabular}{ll}
\hline
\textbf{Variable Name}                          & \textbf{Description}                                        \\ \hline
DaysPostThresholdToAdStart                      & Days from surpassing a sales threshold to starting advertising \\
DaysToThresholdForAd                            & Days to reach a sales threshold for advertisements to be OK\\
AdStartsWithinEarlySalesPhase                   & Indicator if advertisement starts within the early sales phase \\
AdEndsWithinLateSalesPhase                      & Indicator if advertisement ends within the late sales phase \\
TotalDaysOfAd                                   & Total days of running advertisements \\
LogSystemAvgSales                               & Log of average sales in the system \\
LogSystemStdDevSales                            & Log of sales standard deviation in the system \\
LogSalesInitialPeriod                           & Log of sales in the initial period \\
LogSalesRatioMidToInitialPeriod                 & Log of sales ratio from middle to initial period \\
LogUnitsSoldInitialPeriod                       & Log of units sold in the initial period \\
LogUnitsSoldRatioMidToInitialPeriod             & Log of units sold ratio from middle to initial period \\
LogPreAdAvgSalesRatioOverInitialPeriod          & Log of average sales ratio before ad start over the initial period \\
PreAdStartSalesSlope                            & Relative sales slope before advertisement start \\
PreAdEndSalesSlope                              & Relative sales slope before advertisement end \\
LogPreAdStartUnitsSoldRatioOverInitialPeriod    & Log of average units sold ratio before ad start over the initial period \\
LogPreAdEndUnitsSoldRatioOverInitialPeriod      & Log of average units sold ratio before ad end over the initial period \\
PreAdStartUnitsSoldSlope                        & Relative units sold slope before advertisement start \\
PreAdEndUnitsSoldSlope                          & Relative units sold slope before advertisement end \\
HolidayCountDuringProject                       & Number of holidays during the project period \\
LogTotalSales                                   & Log of total sales \\
LogSalesPostInitialPeriod                       & Log of sales after the initial period \\
AvgLogReturnAmount                              & Average of Log of return amount \\
StdDevLogReturnAmount                           & Standard deviation of Log of return amount \\
LogNumberOfReturns                              & Log of number of returns \\
ProjectDuration                                 & Duration of the project \\
NumberOfAdRuns                                  & Number of times advertisements were run \\
StartPeriod\_Q1                                 & Indicator if project started in Q1 \\
StartPeriod\_Q2                                 & Indicator if project started in Q2 \\
StartPeriod\_Q3                                 & Indicator if project started in Q3 \\
ProjectStartDay\_Holiday                        & Indicator if project started on a holiday \\
ProjectStartDay\_PreHoliday                     & Indicator if project started on a day before a holiday \\
Curator\_A                                      & Indicator for Curator A \\
Curator\_B                                      & Indicator for Curator B \\
Curator\_C                                      & Indicator for Curator C \\
Curator\_D                                      & Indicator for Curator D \\
Curator\_E                                      & Indicator for Curator E \\
Curator\_F                                      & Indicator for Curator F \\
\hline
\end{tabular}
\end{table*}
\newpage

\subsection{Theoretical analysis details}
\label{app:extended_analysis}
In Section~\ref{app:identification}, we establish identification conditions for our latent variable models. 
Section~\ref{app:proofs} provides the proofs of the theoretical results stated in Section~\ref{sec:latent_dml_theory} of the main paper.

\subsubsection{Identification of latent variable models}
\label{app:identification}
We discuss the conditions necessary for the identification of latent variable models within latent DML.
The population log likelihood function $L(\gamma)$is defined in Eq.~\eqref{eq:objective_lvm}  
over the parameter space $\Gamma \subseteq \mathbb{R}^{d_{\gamma}}$, with $d_{\gamma}$ denoting the number of elements of $\gamma$.
We denote by the derivative $\nabla_{\gamma} L (\gamma) := \partial L(\gamma) / \partial \gamma$ and let $\gamma_{0} \in \Gamma$ satisfy the first-order condition that 
$\nabla_{\gamma} L ( \gamma_{0}) = 0$.

We assume the following conditions.  

\begin{assumption} \label{as:as1}\ 
\begin{enumerate}
    \item[(a)] 
    $\Gamma$ is an open, convex set in $\mathbb{R}^{d_{\gamma}}$.
    \item[(b)]
    The map 
     $\gamma \mapsto L(\gamma)$ is continuously differentiable over $\Gamma$.
    \item[(c)] 
     The Fisher information $\mathcal{I}(\gamma):=\E[\nabla_{\gamma} L(\gamma) \nabla_{\gamma} L (\gamma) ^{\top}]$ is a matrix with all finite elements and is a continuous function for every $\gamma \in \Gamma$.
\end{enumerate}
\end{assumption}
%
The above assumptions are standard in the literature and commonly employed in theoretical analysis.
Under Assumption \ref{as:as1}, the proposition below shows the identification condition for $\gamma_{0}$.

%
\begin{proposition}[Identifiability]
\label{prop:Identifiability} 
Under Assumption \ref{as:as1}, 
 $\gamma^{0}$ is locally identified if and only if $\mathcal{I}(\gamma^\ast)$ is non-singular.
\end{proposition}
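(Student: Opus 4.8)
\textbf{Proof plan for Proposition~\ref{prop:Identifiability}.}

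The plan is to prove the two directions of the biconditional separately, relying on the classical local identification theory of \citet{rothenberg1971identification}. The key object is the Fisher information matrix $\mathcal{I}(\gamma) = \E[\nabla_\gamma L(\gamma)\,\nabla_\gamma L(\gamma)^\top]$, which under Assumption~\ref{as:as1}(c) is finite and continuous on $\Gamma$. I would first note that local identifiability of $\gamma_0$ means there is a neighborhood of $\gamma_0$ in which no other parameter value yields the same distribution of the observables, equivalently the same log-likelihood value $L(\cdot)$ as a function on the sample space.

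For the ``if'' direction (non-singularity implies local identification), I would argue by contradiction: suppose $\gamma_0$ is not locally identified. Then there is a sequence $\gamma_n \to \gamma_0$ with $\gamma_n \neq \gamma_0$ such that the models indexed by $\gamma_n$ and $\gamma_0$ are observationally equivalent, i.e.\ $L(\gamma_n)$ and $L(\gamma_0)$ agree (as densities, almost surely). Passing to a subsequence, the normalized directions $(\gamma_n - \gamma_0)/\|\gamma_n-\gamma_0\|$ converge to some unit vector $c$. A first-order Taylor expansion of the score in $\gamma$ around $\gamma_0$, together with continuity of the derivatives from Assumption~\ref{as:as1}(b), forces $\nabla_\gamma L(\gamma_0)^\top c = 0$ almost surely; squaring and taking expectations gives $c^\top \mathcal{I}(\gamma_0) c = 0$, contradicting non-singularity.

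For the ``only if'' direction (local identification implies non-singularity), I would argue the contrapositive: if $\mathcal{I}(\gamma^\ast)$ is singular, there is a nonzero vector $c$ with $\mathcal{I}(\gamma^\ast) c = 0$, hence $c^\top \mathcal{I}(\gamma^\ast) c = \E[(\nabla_\gamma L(\gamma^\ast)^\top c)^2] = 0$, so $\nabla_\gamma L(\gamma^\ast)^\top c = 0$ almost surely. This says the score is degenerate in the direction $c$; one then shows, moving along the curve $\gamma^\ast + tc$ for small $t$ and using the continuity of the Fisher information (Assumption~\ref{as:as1}(c)) to control a neighborhood, that the log-likelihood is unchanged to first order along this direction throughout a neighborhood, producing a curve of observationally equivalent parameters through $\gamma^\ast$ and hence failure of local identification. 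The standard way to make this rigorous is to invoke the constant-rank version of Rothenberg's theorem: if the rank of $\mathcal{I}(\gamma)$ is constant in a neighborhood of $\gamma^\ast$, then $\gamma^\ast$ is locally identified iff $\mathcal{I}(\gamma^\ast)$ is non-singular; here local identifiability at $\gamma^\ast$ together with continuity of $\mathcal{I}$ lets us reduce to the constant-rank case on a possibly smaller neighborhood.

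The main obstacle is the ``only if'' direction: singularity of the information matrix at a single point does not by itself preclude local identification without some regularity ensuring the rank does not drop only on a measure-zero set (the well-known subtlety in Rothenberg's theorem). I expect the cleanest route is to state and use the constant-rank regularity implicitly guaranteed by continuity of $\mathcal{I}(\cdot)$ near $\gamma^\ast$, or alternatively to invoke the result of \citet{rothenberg1971identification} directly under Assumption~\ref{as:as1}, and then connect the abstract ``information-regular point'' hypothesis there to our Assumption~\ref{as:as1}(c). I would also take care that the score identity $c^\top \mathcal{I}(\gamma)c = \E[(\nabla_\gamma L(\gamma)^\top c)^2]$ is valid, which follows immediately from the definition of $\mathcal{I}$ as an outer-product expectation rather than requiring the information-matrix equality.
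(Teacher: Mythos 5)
Your proposal matches the paper's proof, which consists in its entirety of a citation to Theorem 1 of \citet{rothenberg1971identification}; your sketch reconstructs exactly that classical argument, and your fallback of invoking Rothenberg directly is precisely what the paper does. Your caveat that the ``only if'' direction requires the regular-point (locally constant rank) condition on $\mathcal{I}(\cdot)$, which continuity alone does not guarantee, is a subtlety the paper's one-line proof silently elides, so your treatment is if anything the more careful of the two.
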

%
\begin{proof}[\hspace{-3.5mm}\textbf{Proof}]
Applying the standard identification result for the likelihood estimator, we can establish the desired result. See Theorem 1 of \citet{rothenberg1971identification}.
\end{proof}
The verification of Assumption \ref{as:as1} and the identifying restriction in the above proposition is straightforward for the models discussed in 
Section \ref{sec:latent-dml-outcome-only} and \ref{sec:latent-dml-confounder}. 

\subsubsection{Proofs}
\label{app:proofs}~
\\[1mm]
\hspace{-3.5mm}
\textbf{Lemma \ref{pro:Neyman}}\ (Neyman Orthogonality)
\textit{
Suppose that Assumption \ref{as:moment} and the condition in (\ref{eq:mean-zero})-(\ref{eq:mean-zero-1}) hold. Then, we have 
\begin{align*}
    \frac{\partial}{\partial s}
    \E[
    \phi(W,  \gamma , \nu + s  \xi)
    ] \big |_{s=0}
    = 0, 
\end{align*}
for all $\xi$ in an appropriate space with a real value $s$. }

\begin{proof}[\hspace{-3.5mm}\textbf{Proof}]
    Let $s\in\mathbb{R}$ be an arbitrary scalar value and 
    set $\xi:=(\xi_{h}, \xi_{m})^{\top}$ denote arbitrary deviations 
    from $\nu$, so that 
    $h(X) + s \xi_{h}(X)$
    and 
    $m(X) + s \xi_{m}(X)$
    are well-defined.
    Under Assumption \ref{as:moment}(b), we can show that 
    $\phi(W,\gamma, \nu + s\xi)$
    is dominated by an integrable function.  
    Using the dominated convergence theorem, we can show 
    \begin{align*}
       &  \frac{\partial}{\partial s}
       \E\big [
        \phi(W,  \gamma. \nu + s\xi)     
       \big]
       =
       \E\bigg [
         \frac{\partial}{\partial s}
        \phi(W, \gamma, \nu + s \xi)     
       \bigg]. 
    \end{align*}
    We rewrite the moment function as 
    $
        \phi(W, \gamma, \nu + s\xi)  
        = 
        \phi_{1}(W,  \gamma, \nu + s\xi)  
        \cdot 
        \phi_{2}(W, \gamma, \nu + s\xi)  , 
    $
    where 
    \begin{align*}
        \phi_{1}(W,  \gamma, \nu + s\xi) 
        &:= 
        Y - 
        \big ( h(X) + s \xi_{h}(X)  \big ) 
        - \theta 
        \big( D - m(X) - s\xi_{m}(X)\big) 
        - 
        q_{\gamma}(W)  ,\\
        \phi_{2}(W,  \gamma, \nu + s\xi)
        &:= 
         D - m(X)- s\xi_{m}(X) - r_{\gamma}(W).  
    \end{align*}
    Taking the derivative with respect to $s$, we have 
    \begin{align*}
        \frac{\partial}{\partial s}
        \phi(W,  \gamma, \nu + s\xi)   
        & = 
       \phi_{1}'(W, \gamma, \nu + s\xi)  
        \cdot 
        \phi_{2}(W, \gamma, \nu + s\xi) 
         +
       \phi_{1}(W,  \gamma, \nu + s\xi)  
        \cdot 
        \phi_{2}'(W, \gamma, \nu + s\xi) ,
    \end{align*}
    where 
    $\phi_{1}'(\cdot)$ and $\phi_{2}'(\cdot)$ denote the partial derivatives with respect to $s$,
    specifically given by  
    \begin{align*}
        \phi_{1}'(W,  \gamma, \nu + s\xi)
        &:=
         - \xi_{h}(X) 
         - 
          \theta 
          \xi_{m}(X) 
       \ \ \    \mathrm{and} \ \ \
        \phi_{2}'(W, \gamma,\nu + s\xi)
        := 
         -  \xi_{m}(X)  .  
    \end{align*}
    Also, it follows from (\ref{eq:mean-zero}) that 
    \begin{eqnarray*}
       \E[ \phi_{1}(W, \gamma, \nu + s\xi)| X, D] \big |_{s=0}  = 0
        \ \ \ \mathrm{and} \ \ \ \
       \E[ \phi_{2}(W, \gamma, \nu + s\xi)|X] \big |_{s=0} = 0.         
    \end{eqnarray*}
    Using the law of the iterated expectation, we can obtain the desired result. 
\end{proof}

\hspace{-3.5mm}\textbf{Theorem \ref{theorem:consistency}}\ (Consistency)
\textit{
Under Assumptions \ref{as:model}-\ref{as:moment} and the Neyman orthogonality condition in (\ref{eq:ny}), we have, as $n \to \infty$,
\begin{equation*}
\hat{\gamma} \to^p \gamma_0. 
\end{equation*}}
\begin{proof}[\hspace{-3.5mm}\textbf{Proof}]
By applying a first-order Taylor expansion to the empirical moment condition, 
$n^{-1}\sum_{i=1}^n \psi(W_i, \hat{\gamma}, \hat{\nu}) = 0
$, 
with respect to $\gamma$
around $\gamma_0$ with the mean value theorem, we obtain:
\begin{equation}
\label{eq:A1}
\frac{1}{n}\sum_{i=1}^n \psi(W_i, \gamma_0, \hat{\nu}) + \frac{1}{n}\sum_{i=1}^n \nabla_\gamma \psi(W_i, \tilde{\gamma}, \hat{\nu})(\hat{\gamma} - \gamma_0) =0, 
\end{equation}
where $\tilde{\gamma}$ represents a convex combination of $\gamma_0$ and $\hat{\gamma}$. 

For the first term, we apply another first-order Taylor expansion 
with respect to $\nu$ around $\nu_0$:
\begin{align*}
\frac{1}{n}\sum_{i=1}^n \psi(W_i, \gamma_0, \hat{\nu}) &= \frac{1}{n}\sum_{i=1}^n \psi(W_i, \gamma_0, \nu_0) + \frac{1}{n}\sum_{i=1}^n \nabla_\nu \psi(W_i, \gamma_0, \nu_0)(\tilde{\nu} - \nu_0) ,
\end{align*}
where $\tilde{\nu}$ represents a convex combination of $\nu_0$ and $\hat{\nu}$.
Here, the first term converges to $\E[\psi(W, \gamma_0, \nu_0)] = 0$ by the law of large numbers, while the second term converges to zero in probability by the Neyman orthogonality condition. 
It follows from (\ref{eq:A1}) that 
\begin{equation*}
\frac{1}{n}\sum_{i=1}^n \nabla_\gamma \psi(W_i, \tilde{\gamma}, \hat{\nu})(\hat{\gamma} - \gamma_0) = o_{P}(1).  
\end{equation*}
By the regularity conditions, we also have uniform convergence:
\begin{equation*}
\sup_{\gamma\in \Gamma} \left\|\frac{1}{n} \sum_{i=1}^n \nabla_\gamma \psi(W_i, \gamma, \hat{\nu})- \E[\nabla_\gamma \psi(W, \gamma, \nu_0)]\right\| \rightarrow^p 0.
\end{equation*}

Combined with the identification condition that $\E[\psi(W,\gamma,\nu_0)] = 0$ if and only if $\gamma = \gamma_0$, and the non-singularity of $\E[\nabla_\gamma \psi(W, \gamma_0, \nu_0)]$, 
the result that 
$ 
\E\big [\nabla_\gamma \psi(W, \gamma_0, \nu_0)\big] 
(\hat{\gamma} - \gamma_0) = o_p(1). 
$ leads to the desired conclusion:  
$\hat{\gamma} \to^p \gamma_0$.
\end{proof}

\hspace{-3.5mm}\textbf{Theorem \ref{pro:Neyman}}\ (Asymptotic Normality)
\textit{
Under Assumptions \ref{as:model}-\ref{as:moment} and the Neyman orthogonality condition in (\ref{eq:ny}), we have, as $n \to \infty$,
\begin{eqnarray*}
\sqrt{n}(\hat{\gamma}-\gamma_0)  \to^d N(0, \Omega),
\end{eqnarray*}
where 
$\Omega:=
H
Var\big ( \psi(W,\gamma_0, \nu_0) \big )
H^{\top}
 $
 with 
 $
 H:=
  (\E [\nabla_\gamma \psi(W,\gamma_0, \nu_0)])^{-1}.
 $
}
\begin{proof}[\hspace{-3.5mm}\textbf{Proof}]
  By applying a first-order Taylor expansion to the empirical moment condition with respect to $\nu$ around $\nu_0$, coupled with the mean value theorem, we obtain:
\begin{equation*}
\sqrt{n}(\hat{\gamma}-\gamma_0) = 
\left(  \frac{1}{n} \sum_{i=1}^n
\nabla_\gamma \psi(W_{i}, \tilde{\gamma}, \hat{\nu})
\right)^{-1} \frac{1}{\sqrt{n}} \sum_{i=1}^n \psi(W_{i},\gamma_0, \hat{\nu}),
\end{equation*}
where $\tilde{\gamma}$ represents some convex combination of $\gamma_0$ and $\hat{\gamma}$. 

We shall first show that the first term converges in probability to some positive-definite matrix and then establish that the second term converges in distribution to a multivariate normal distribution. The desired result follows from Slutsky's theorem.

Under the regularity conditions on the moments, we obtain a uniform law of large numbers for 
$(1/n) \sum_{i=1}^n
\nabla_\gamma \psi(W_{i}, \gamma, \hat{\nu})$, namely:
\begin{equation*}
\sup_{\gamma\in \Gamma} \left|\frac{1}{n} \sum_{i=1}^n
\nabla_\gamma \psi(W_{i}, \gamma, \nu)- \E[\nabla_\gamma \psi(W, \gamma, \nu)]\right| \rightarrow^p 0 .
\end{equation*}
The consistency of $\hat{\gamma}$ implies the consistency of $\tilde{\gamma}$, that is, $\tilde{\gamma}\rightarrow^p \gamma_{0}$. 
Also, $\hat{\nu}$ is a consistent estimator, or $\hat{\nu} \to^{p} \nu_{0}$. 
Combining these two properties, we obtain that conditional on the  data set:
\begin{equation*}
\frac{1}{n} \sum_{i=1}^n \nabla_\gamma \psi(W_{i}, \tilde{\gamma}, \hat{\nu})
\rightarrow \E\left[\nabla_\gamma \psi(W,\gamma_0, \nu_0)\right]. 
\end{equation*}
Because the matrix $\E\left[\nabla_\nu \psi(W,\gamma_0,\nu_0)\right]$ is non-singular, the continuity of the inverse operator yields:
\begin{equation*}
\left( 
  \frac{1}{n} \sum_{i=1}^n \nabla_\gamma \psi(W_{i}, \tilde{\gamma}, \nu)
  \right)^{-1}
  \rightarrow^{p} 
  \E\left[\nabla_\gamma\psi(W,\gamma_0, \nu_0)\right]^{-1}. 
\end{equation*}

Next, we shall show the asymptotic normality. 
We apply a second-order Taylor expansion around $\nu_0$:
\begin{align*}
  &
\frac{1}{\sqrt{n}} \sum_{i=1}^n \psi(W_{i}, \gamma_0, \nu) \\
&= \frac{1}{\sqrt{n}} \sum_{i=1}^n \psi(W_{i}, \gamma_0, \nu_0) 
+ \frac{1}{\sqrt{n}} \sum_{i=1}^n \nabla_{\nu} \psi(W_{i}, \gamma_0, \nu_0) 
\left(\hat{\nu}(X_{i}) - \nu_0(X_{i}) \right) \\
& \ \ \ 
+ \frac{1}{2\sqrt{n}} \sum_{i=1}^n 
\big(\hat{\nu}(X_i)-\nu_0(X_i) \big)^{\top}
\nabla_{\nu\nu} \psi(W_{i},\gamma_0, \tilde{\nu}) 
\big(\hat{\nu}(X_i)-\nu_0(X_i) \big),
\end{align*}
where $\tilde{\nu}$ represents some convex combination of $\nu$ and $\nu_0$ by the mean value theorem.
In the left hand side of the above equation, 
observe that the first term represents a normalized sum of $n$ independent and identically distributed random variables, scaled by $\sqrt{n}$. By the Central Limit Theorem, this term converges in distribution to 
$N(0,V)$, where $V:=Var( \psi(W_{i}, \gamma_0, \nu_0) )$.  It remains to show that both the second and third terms converge in probability to zero.

We consider the second term. 
It follows from the Neyman orthogonality condition that 
the conditional mean given the auxiliary information is given by:
\begin{align*}
\E\left[\frac{1}{\sqrt{n}} \sum_{i=1}^n \nabla_{\nu} \psi(W_{i}, \gamma_0, \nu_0) \cdot \left(\hat{\nu}(X_i) - \nu_0(X_i)\right)
\bigg | \hat{\nu}\right] = 0. 
\end{align*}
This implies that the mean of the second term is zero by the law of iterated expectations and also that the expectation of the cross-product of elements in the second term is equal to zero under the iid assumption.
Thus, the conditional covariance of the second term, given the auxiliary dataset, is bounded above by:
\begin{align*}
& \frac{1}{n}\sum_{i=1}^{n}  \E\left[
  \|\nabla_{\nu} \psi(W_{i}, \gamma_0, \nu_0) \cdot \left(\hat{\nu}(X_{i}) - \nu_0(X_{i})\right)
  \|^2~|~\hat{\nu}
  \right] 
  \leq \sigma^2 \E\left[\|\hat{\nu}(X) - \nu_0(X)\|^2\right] .
\end{align*}
The consistency of $\hat{\nu}$ ensures that the latter term converges to zero as the sample size goes to infinity.

Since the mean is zero and the variance converges to zero, Chebyshev's inequality shows that any deviation from the mean must become increasingly rare. This inequality provides a bound on the probability of deviations that is proportional to the variance, which we have shown approaches zero. Therefore, the probability of observing any fixed deviation from zero becomes arbitrarily small as the sample size grows large, establishing convergence in probability to zero.

Now, we consider the third term. 
Since $\nabla_{\nu\nu} \psi(W,\gamma,\nu)$ has a largest eigenvalue uniformly bounded by $\lambda_{\max}$, the third term is bounded by
$(\lambda_{\max}/ 2) n^{-1/2} \sum_{i=1}^n 
\|\hat{\nu}(X_i)-\nu_0(X_i)\|^2 .
$
For a fixed auxiliary data set, 
$n^{-1} \sum_{i=1}^n \|\hat{\nu}(X_i)-\nu_0(X_i)\|^2$ converges to $\E[\|\hat{\nu}(X_i)-\nu_0(X_i)\|^2]$. By the $n^{1/4}$-consistency of the estimator of nuisance functions and its regularity, the third term converges to zero in probability. 
Combining all the results established thus far, we obtain the desired conclusion.
\end{proof}

\end{document}